\title{The Lambek-Grishin calculus is NP-complete}
\author{Jeroen Bransen}
\institute{Utrecht University, The Netherlands}
\def\sotimes{\cdot\otimes\cdot}
\def\sslash{\cdot\slash\cdot}
\def\sbackslash{\cdot\backslash\cdot}
\def\soplus{\cdot\oplus\cdot}
\def\soslash{\cdot\oslash\cdot}
\def\sobslash{\cdot\obslash\cdot}
\def\provable{\vdash_{LG}}
\def\nprovable{\not\vdash_{LG}}
\def\impl{\rightarrow}
\def\vbar{\ \vert\ }
\def\inp#1{#1^{\bullet}}
\def\outp#1{#1^{\circ}}
\def\meet#1#2#3{#1 \stackrel{#3}{\sqcap} #2}
\begin{document}
\maketitle

\begin{abstract}
The Lambek-Grishin calculus \textbf{LG} is the symmetric extension of the non-associative Lambek calculus \textbf{NL}. In this paper we prove that the derivability problem for \textbf{LG} is NP-complete.
\end{abstract}

\section{Introduction}
In his \citeyear{lambek:L} and \citeyear{lambek:NL} papers, \citeauthor{lambek:L} formulated two versions of the \emph{Syntactic Calculus}: in \citep{lambek:L}, types are assigned to \emph{strings}, which are then combined by an \emph{associative} operation; in \citep{lambek:NL}, types are assigned to \emph{phrases} (bracketed strings), and the composition operation is non-associative. We refer to these two versions as \textbf{L} and \textbf{NL} respectively.

As for generative power, \cite{kandulski} proved that \textbf{NL} defines exactly the context-free languages. \cite{pentus:contextfree} showed that this also holds for associative \textbf{L}. As for the complexity of the derivability problem, \cite{groote:L-polynomial} showed that for \textbf{NL} this belongs to \texttt{PTIME}; for \textbf{L}, \cite{pentus:np-complete} proves that the problem is NP-complete and \cite{savateev:pf-np-complete} shows that NP-completeness also holds for the product-free fragment of \textbf{L}.

It is well known that some natural language phenomena require generative capacity beyond context-free. Several extensions of the Syntactic Calculus have been proposed to deal with such phenomena. In this paper we look at the Lambek-Grishin calculus \textbf{LG} \citep{moortgat:LG-2007,moortgat:LG}. \textbf{LG} is a \emph{symmetric} extension of the nonassociative Lambek calculus \textbf{NL}. In addition to $\otimes, \backslash, \slash$ (product, left and right division), \textbf{LG} has dual operations $\oplus, \obslash, \oslash$ (coproduct, left and right difference). These two families are related  by linear distributivity principles. \cite{melissen} shows that all languages which are the intersection of a context-free language and the permutation closure of a context-free language are recognizable in \textbf{LG}. This places the lower bound for \textbf{LG} recognition beyond LTAG. The upper bound is still open.

The key result of the present paper is a proof that the derivability problem for \textbf{LG} is NP-complete. This will be shown by means of a reduction from SAT.\footnote{This paper has been written as a result of my Master thesis supervised by Michael Moortgat. I would like to thank him, Rosalie Iemhoff and Arno Bastenhof for comments and I acknowledge that any errors are my own.}

\section{Lambek-Grishin calculus}
We define the formula language of \textbf{LG} as follows.

Let $Var$ be a set of \emph{primitive types}, we use lowercase letters to refer to an element of $Var$. Let \emph{formulas} be constructed using primitive types and the binary connectives $\otimes$, $\slash$, $\backslash$, $\oplus$, $\oslash$ and $\obslash$ as follows:

\[ A, B ::= p \vbar A \otimes B \vbar A \slash B \vbar B \backslash A \vbar A \oplus B \vbar A \oslash B \vbar B \obslash A \]
The sets of \emph{input} and \emph{output} \emph{structures} are constructed using formulas and the binary structural connectives $\sotimes$, $\sslash$, $\sbackslash$, $\soplus$, $\soslash$ and $\sobslash$ as follows:
\[ \mbox{(input)} \qquad X, Y ::= A \vbar X \sotimes Y \vbar X \soslash P \vbar P \sobslash X \]
\[ \mbox{(output)} \qquad P, Q ::= A \vbar P \soplus Q \vbar P \sslash X \vbar X \sbackslash P \]
The \emph{sequents} of the calculus are of the form $X \impl P$, and as usual we write $\provable X \impl P$ to indicate that the sequent $X \impl P$ is derivable in \textbf{LG}. The axioms and inference rules are presented in Figure~\ref{fig:lgrules}, where we use the \emph{display logic} from \citep{gore:diplaylogic}, but with different symbols for the \emph{structural connectives}.

\begin{figure}
\begin{subfigure}{\linewidth}
   \[
   \infer[Ax]{p \impl p}{}
   \]
   \[
   \infer[Cut]{X \impl P}{X \impl A & A \impl P}
   \]

   \[
   \infer=[r]{X \impl P \sslash Y}{\infer=[r]{X \sotimes Y \impl P}{Y \impl X \sbackslash P}} \qquad
   \infer=[dr]{P \sobslash X \impl Q}{\infer=[dr]{X \impl P \soplus Q}{X \soslash Q \impl P}}
   \]
   \caption{Display rules}
\end{subfigure}

\begin{subfigure}{\linewidth}
   \[ \] % Some extra top spacing
   \[
   \infer[d \oslash \slash]{X \soslash Q \impl P \sslash Y}{X \sotimes Y \impl P \soplus Q} \qquad
   \infer[d \oslash \backslash]{Y \soslash Q \impl X \sbackslash P}{X \sotimes Y \impl P \soplus Q}
   \]

   \[
   \infer[d \obslash \slash]{P \sobslash X \impl Q \sslash Y}{X \sotimes Y \impl P \soplus Q} \qquad
   \infer[d \obslash \backslash]{P \sobslash Y \impl X \sbackslash Q}{X \sotimes Y \impl P \soplus Q}
   \]
   \caption{Distributivity rules (Grishin interaction principles)}
\end{subfigure}

\begin{subfigure}{\linewidth}
   \[ \] % Some extra top spacing
   \[
   \infer[\otimes L]{A \otimes B \impl P}{A \sotimes B \impl P} \qquad
   \infer[\oplus R]{X \impl B \oplus A}{X \impl B \soplus A}
   \]
   
   \[
   \infer[\slash R]{X \impl A \slash B}{X \impl A \sslash B} \qquad
   \infer[\obslash L]{B \obslash A \impl P}{B \sobslash A \impl P}
   \]
   
   \[
   \infer[\backslash R]{X \impl B \backslash A}{X \impl B \sbackslash A} \qquad
   \infer[\oslash L]{A \oslash B \impl P}{A \soslash B \impl P}
   \]
   
   \[
   \infer[\otimes R]{X \sotimes Y \impl A \otimes B}{X \impl A \quad & Y \impl B} \qquad
   \infer[\oplus L]{B \oplus A \impl P \soplus Q}{B \impl P \quad & A \impl Q}
   \]
   
   \[
   \infer[\slash L]{B \slash A \impl P \sslash X}{X \impl A \quad & B \impl P} \qquad
   \infer[\obslash R]{P \sobslash X \impl A \obslash B}{X \impl B \quad & A \impl P}
   \]
   
   \[
   \infer[\backslash L]{A \backslash B \impl X \sbackslash P}{X \impl A \quad & B \impl P} \qquad
   \infer[\oslash R]{X \soslash P \impl B \oslash A}{X \impl B \quad & A \impl P}
   \]
   \caption{Logical rules}
\end{subfigure}

\caption{The Lambek-Grishin calculus inference rules}\label{fig:lgrules}
\end{figure}

It has been proven by \cite{moortgat:LG-2007} that we have \emph{Cut admissibility} for \textbf{LG}. This means that for every derivation using the \emph{Cut}-rule, there exists a corresponding derivation that is \emph{Cut-free}. Therefore we will assume that the Cut-rule is not needed anywhere in a derivation.

\section{Preliminaries}
\subsection{Derivation length}
We will first show that for every derivable sequent there exists a Cut-free derivation that is polynomial in the length of the sequent. The length of a sequent $\varphi$, denoted as $\vert \varphi \vert$, is defined as the number of (formula and structural) connectives used to construct this sequent. A subscript will be used to indicate that we count only certain connectives, for example $\vert \varphi \vert_{\otimes}$.

\begin{lemma} \label{lemma:logical-n}
If $\provable \varphi$ there exists a derivation with exactly $\vert \varphi \vert$ \emph{logical rules}.
\end{lemma}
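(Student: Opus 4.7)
The plan is a direct counting argument based on invariants of each individual rule. I would show that in any Cut-free derivation, each logical-rule application is responsible for exactly one formula connective of the end sequent, whereas display and distributivity rules do not contribute to this count; together with Cut admissibility, this pins down the number of logical rules at exactly $|\varphi|$.

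The first and main step is to inspect the twelve logical rules of Figure~\ref{fig:lgrules}(c) and verify the following invariant: read from premise(s) to conclusion, every logical rule introduces exactly one new formula connective. For each of the six unary rules ($\otimes L$, $\oplus R$, $\slash R$, $\obslash L$, $\backslash R$, $\oslash L$), the new formula connective merely replaces a structural connective of the corresponding shape in the same position of the premise. For each of the six binary rules ($\otimes R$, $\oplus L$, $\slash L$, $\obslash R$, $\backslash L$, $\oslash R$), the new formula connective is introduced together with a matching new structural connective that combines material from the two premises. By contrast, the display rules of Figure~\ref{fig:lgrules}(a) and the Grishin distributivity rules of Figure~\ref{fig:lgrules}(b) are equivalences that merely permute existing structural material, so they preserve the number of formula connectives in a sequent. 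Finally, axioms $p \impl p$ contain no connectives at all.

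With the invariant in hand, a straightforward induction on the height of a Cut-free derivation of $\varphi$ shows that the total number of logical-rule applications equals the number of formula connectives of the end sequent, which matches $|\varphi|$ under the length measure fixed above. Invoking the Cut admissibility result cited just before the lemma, every derivable $\varphi$ admits a Cut-free derivation, and that derivation therefore uses exactly $|\varphi|$ logical rules.

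I do not expect a real obstacle: the whole argument reduces to a case-by-case check against the rule schemata. The one thing to be careful about is to make both directions of the equality explicit — that no logical rule is ever "wasted" (since display and distributivity rules neither create nor destroy formula connectives, one cannot overshoot $|\varphi|$), and that every formula connective of $\varphi$ must be introduced by some logical rule (since axioms have none to start with).
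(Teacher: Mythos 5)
Your proposal is correct and follows essentially the same route as the paper: both arguments observe that each logical rule accounts for exactly one formula connective while display and Grishin rules leave the count unchanged, and then invoke Cut admissibility to obtain a Cut-free derivation. Your version merely spells out the rule-by-rule invariant and the induction that the paper leaves implicit.
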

\begin{proof}
If $\provable \varphi$ then there exists a Cut-free derivation for $\varphi$. Because every logical rule removes one logical connective and there are no rules that introduce logical connectives, this derivation contains $\vert \varphi \vert$ logical rules.
\qed\end{proof}

\begin{lemma} \label{lemma:grishin-nn}
If $\provable \varphi$ there exists a derivation with at most $\frac{1}{4} \vert \varphi \vert^2$ \emph{Grishin interactions}.
\end{lemma}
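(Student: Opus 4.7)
The plan is to exhibit a Cut-free derivation of $\varphi$ with at most $t(\varphi)\cdot c(\varphi)$ Grishin interactions, where $t$ counts the connectives in the \emph{tensor family} $\{\otimes,\slash,\backslash,\sotimes,\sslash,\sbackslash\}$ and $c$ counts those in the \emph{coproduct family} $\{\oplus,\oslash,\obslash,\soplus,\soslash,\sobslash\}$. Since every connective lies in exactly one family, $t(\varphi)+c(\varphi)=|\varphi|$, and AM--GM then gives $t(\varphi)\,c(\varphi)\leq \frac{1}{4}(t(\varphi)+c(\varphi))^{2}=\frac{1}{4}|\varphi|^{2}$, which is the stated bound.

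First I would verify, by inspecting Figure~\ref{fig:lgrules}, that the family counts are stable along any Cut-free derivation: display rules preserve both $t$ and $c$; each Grishin rule also preserves them, simultaneously swapping a tensor-family structural connective for a coproduct-family one and performing the reverse swap elsewhere in the sequent; unary logical rules merely convert a formula connective into a structural connective of the \emph{same} family; and binary logical rules consume one formula connective plus one structural connective of the same family and partition the remaining connectives between the two premises. A short calculation then gives $t(\psi_{1})\,c(\psi_{1})+t(\psi_{2})\,c(\psi_{2})\leq t(\varphi)\,c(\varphi)$ across any binary split, which is the inequality I need for the induction to go through.

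Next I would attach to every structural connective appearing in the derivation an \emph{ancestor} in $\varphi$: either the same connective, if it is already present in the root, or the formula connective from which it was unfolded by a unary logical rule. Display rules preserve ancestors, so each Grishin interaction identifies a unique pair (tensor-family ancestor, coproduct-family ancestor) corresponding to the two structural connectives at the top positions of its premise. The central claim is then \textbf{(N)}: in a suitably normalized Cut-free derivation, no ancestor pair fires a Grishin interaction more than once. Granted (N), an induction on derivation structure, using the split inequality for binary rules, yields the target bound $t(\varphi)\,c(\varphi)$ on the total number of Grishin interactions in the derivation.

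The main obstacle is establishing (N). The dangerous situation is that after a Grishin interaction rearranges an ancestor pair into a new structural configuration, a combination of display rules together with further Grishin steps could in principle bring the same pair back into the required $\sotimes$/$\soplus$ positions at the top and fire a second, redundant pairing. I would attack this by a commutation analysis on the maximal blocks of structural rules (display + Grishin) sandwiched between two logical rules, showing that any such detour admits a rewrite to an equivalent block with strictly fewer Grishin steps; this amounts to selecting a focused/canonical form for display-calculus derivations.
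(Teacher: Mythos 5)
Your proposal follows essentially the same route as the paper: bound the number of Grishin interactions by the product of the input-family and output-family connective counts, then conclude with AM--GM since the two counts sum to $\vert \varphi \vert$. The one step you explicitly leave open, claim (N) that no (input-connective, output-connective) ancestor pair participates in more than one Grishin interaction, is precisely the step the paper itself asserts with only the remark that the interactions are irreversible and apply across the two families, so your sketch is in substance the published argument (and is more candid about where its combinatorial weight lies).
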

\begin{proof}
Let us take a closer look at the Grishin interaction principles. First of all, it is not hard to see that the interactions are irreversible. Also note that the interactions happen between the families of input connectives $\{\otimes, \slash, \backslash\}$ and output connectives $\{\oplus, \oslash, \obslash\}$ and that the Grishin interaction principles are the only rules of inference that apply on both families. So, on any pair of one input and one output connective, at most one Grishin interaction principle can be applied.

If $\provable \varphi$ there exists a Cut-free derivation of $\varphi$. The maximum number of possible Grishin interactions in 1 Cut-free derivation is reached when a Grishin interaction is applied on every pair of one input and one output connective. Thus, the maximum number of Grishin interactions in one Cut-free derivation is $\vert \varphi \vert_{\{\otimes, \slash, \backslash\}} \cdot \vert \varphi \vert_{\{\oplus, \oslash, \obslash\}}$.

By definition, $\vert \varphi \vert_{\{\otimes, \slash, \backslash\}} + \vert \varphi \vert_{\{\oplus, \oslash, \obslash\}} = \vert \varphi \vert$, so the maximum value of $\vert \varphi \vert_{\{\otimes, \slash, \backslash\}} \cdot \vert \varphi \vert_{\{\oplus, \oslash, \obslash\}}$ is reached when $\vert \varphi \vert_{\{\otimes, \slash, \backslash\}} = \vert \varphi \vert_{\{\oplus, \oslash, \obslash\}} = \frac{\vert \varphi \vert}{2}$. Then the total number of Grishin interactions in 1 derivation is $\frac{\vert \varphi \vert}{2} \cdot \frac{\vert \varphi \vert}{2} = \frac{1}{4} \vert \varphi \vert^2$, so any Cut-free derivation of $\varphi$ will contain at most $\frac{1}{4} \vert \varphi \vert^2$ Grishin interactions.
\qed\end{proof}

\begin{lemma} \label{lemma:residuation-2n}
In a derivation of sequent $\varphi$ at most $2 \vert \varphi \vert$ display rules are needed to display any of the structural parts.
\end{lemma}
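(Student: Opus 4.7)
The plan is to argue by induction on the depth at which a target substructure appears in the sequent, exploiting the residuation-triangle structure of the display rules. First I would observe that the display rules in Figure~\ref{fig:lgrules}(a) group the structural connectives into two triangles of three mutually interderivable sequent forms each: for instance, $X \sotimes Y \impl P$, $X \impl P \sslash Y$, and $Y \impl X \sbackslash P$ are all equivalent, and one can pass from any form to any other using at most 2 applications of the basic display rule. The dual triangle involving $\soplus$, $\soslash$, $\sobslash$ behaves symmetrically. Consequently, for any single structural connective lying on the path from the turnstile to a target substructure, at most 2 display rules are needed to rotate it through the turnstile, bringing the target one level closer to being displayed.

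Next I would set up the induction on the depth $d$ of the target substructure $S$, defined as the number of structural connectives on the path from $S$ to the turnstile. The base case $d = 0$ is trivial: $S$ is already the entire left-hand or right-hand side of the sequent, so no display rules are needed. For the inductive step, $S$ lies strictly inside some parent structural connective; applying at most 2 display rules from the appropriate residuation triangle rearranges the sequent so that $S$ sits at depth $d-1$ (with the surrounding context rewritten on the other side of the turnstile). The inductive hypothesis then yields at most $2(d-1)$ further display rules, for a total of $2d$.

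Finally, since the depth of any substructure of $\varphi$ is bounded by the total number of structural connectives in $\varphi$, which is itself bounded by $|\varphi|$, we conclude that at most $2|\varphi|$ display rules suffice to display any chosen structural part. The only non-trivial ingredient is the per-layer case analysis verifying the ``at most 2 rules per connective'' claim; this is routine given the symmetry between the input and output families of structural connectives and follows by direct inspection of the display rules. The main thing to be careful about is confirming that each rotation preserves the input/output well-formedness invariant imposed on structures, which indeed holds for each of the six display rules as presented.
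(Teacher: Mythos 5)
Your proposal is correct and follows essentially the same argument as the paper: the target substructure is nested under at most $\vert \varphi \vert$ structural connectives, and each enclosing connective can be rotated through the turnstile with at most two applications of the $r$ or $dr$ rules from the corresponding residuation triangle. Your explicit induction on depth is merely a more formal presentation of the paper's direct count.
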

\begin{proof}
A structural part in sequent $\varphi$ is nested under at most $\vert \varphi \vert$ structural connectives. For each of these connectives, one or two $r$ or $dr$ rules can display the desired part, after which the next connective is visible. Thus, at most $2 \vert \varphi \vert$ display rules are needed to display any of the structural parts.
\end{proof}

\begin{lemma} \label{lemma:length-n3}
If $\provable \varphi$ there exists a Cut-free derivation of length $O(\vert \varphi \vert^3)$.
\end{lemma}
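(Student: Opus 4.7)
The plan is to put the three preceding lemmas together and count rule applications block by block. By Lemma~\ref{lemma:logical-n} and Lemma~\ref{lemma:grishin-nn}, I may assume a Cut-free derivation of $\varphi$ in which the non-display rules consist of exactly $\vert \varphi \vert$ logical rules and at most $\frac{1}{4} \vert \varphi \vert^2$ Grishin interactions, giving $O(\vert \varphi \vert^2)$ non-display rules in total.

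Next I would normalize the derivation so that the display rules appear only in blocks, each block sitting immediately before a logical or Grishin rule and serving only to bring the active structural part into display position for that rule. Since each logical/Grishin rule acts on a single structural part, Lemma~\ref{lemma:residuation-2n} bounds every such block by $2 \vert \varphi \vert$ display steps. Multiplying, the total number of display rules is at most $2\vert \varphi \vert \cdot O(\vert \varphi \vert^2) = O(\vert \varphi \vert^3)$, and adding the non-display contribution preserves this bound.

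The main obstacle is the normalization step: I need to argue that any Cut-free derivation can be rewritten so that no display rule is ``wasted''. The justification is that the display rules are invertible double-line rules which do not change the multiset of connectives in the sequent, so any display application not needed for the next logical or Grishin inference can be cancelled with its inverse later in the derivation. This lets me replace arbitrary interleavings of display rules by a canonical block of length at most $2\vert \varphi \vert$ preceding each of the $O(\vert \varphi \vert^2)$ non-display inferences, yielding the $O(\vert \varphi \vert^3)$ bound.
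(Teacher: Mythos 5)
Your proposal is correct and takes essentially the same approach as the paper: Lemmas~\ref{lemma:logical-n} and~\ref{lemma:grishin-nn} bound the non-display inferences by $O(\vert \varphi \vert^2)$, and Lemma~\ref{lemma:residuation-2n} bounds each intervening block of display rules by $2 \vert \varphi \vert$, giving the product $O(\vert \varphi \vert^3)$. Your explicit normalization step (cancelling wasted display rules via their invertibility) merely makes precise what the paper leaves implicit when it counts the derivation as these rules ``with between each pair of consecutive rules the display rules''.
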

\begin{proof}
From Lemma~\ref{lemma:logical-n} and Lemma~\ref{lemma:grishin-nn} we know that there exists a derivation with at most $\vert \varphi \vert$ logical rules and $\frac{1}{4} \vert \varphi \vert^2$ Grishin interactions. Thus, the derivation consists of $\vert \varphi \vert + \frac{1}{4} \vert \varphi \vert^2$ rules, with between each pair of consecutive rules the display rules. From Lemma~\ref{lemma:residuation-2n} we know that at most $2 \vert \varphi \vert$ display rules are needed to display any of the structural parts. So, at most $2 \vert \varphi \vert \cdot (\vert \varphi \vert + \frac{1}{4} \vert \varphi \vert^2) = 2 \vert \varphi \vert ^2 + \frac{1}{2} \vert \varphi \vert^3$ derivation steps are needed in the shortest possible Cut-free derivation for this sequent, and this is in $O(\vert \varphi \vert^3)$.
\qed\end{proof}

\subsection{Additional notations}
Let us first introduce some additional notations to make the proofs shorter and easier readable.

Let us call an input structure $X$ which does not contain any structural operators except for $\sotimes$ a \emph{$\otimes$-structure}. A $\otimes$-structure can be seen as a binary tree with $\sotimes$ in the internal nodes and formulas in the leafs. Formally we define $\otimes$-structures $U$ and $V$ as:
\[ U, V ::= A \vbar U \sotimes V \]

We define $X[]$ and $P[]$ as the input and output structures $X$ and $P$ with a hole in one of their leafs. Formally:
\[
X[] ::= [] \vbar X[] \sotimes Y \vbar Y \sotimes X[] \vbar X[] \soslash Q \vbar Y \soslash P[] \vbar Q \sobslash X[] \vbar P[] \sobslash Y
\]
\[
P[] ::= [] \vbar P[] \soplus Q \vbar Q \soplus P[] \vbar P[] \sslash Y \vbar Q \sslash X[] \vbar Y \sbackslash P[] \vbar X[] \sbackslash Q
\]
This notation is similar to the one of \cite{groote:L-polynomial} but with structures. If $X[]$ is a structure with a hole, we write $X[Y]$ for $X[]$ with its hole filled with structure $Y$. We will write $X^\otimes[]$ for a $\otimes$-structure with a hole.

Furthermore, we extend the definition of hole to formulas, and define $A[]$ as a \emph{formula} $A$ with a hole in it, in a similar manner as for structures. Hence, by $A[B]$ we mean the formula $A[]$ with its hole filled by formula $B$.

In order to distinguish between input and output polarity formulas, we write $\inp{A}$ for a formula with \emph{input} polarity and $\outp{A}$ for a formula with \emph{output} polarity. Note that for structures this is already defined by using $X$ and $Y$ for input polarity and $P$ and $Q$ for output polarity. This can be extended to formulas in a similar way, and we will use this notation only in cases where the polarity is not clear from the context.

\subsection{Derived rules of inference}
Now we will show and prove some derived rules of inference of \textbf{LG}.

\begin{lemma}\label{lemma:replace}
If $\provable A \impl B$ and we want to derive $X^\otimes[A] \impl P$, we can \emph{replace} $A$ by $B$ in $X^\otimes[]$. We have the inference rule below:\\\\
\infer[Repl]{X^\otimes[A] \impl P}{A \impl B & X^\otimes[B] \impl P}
\end{lemma}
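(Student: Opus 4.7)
The plan is to proceed by structural induction on the hole-context $X^\otimes[]$. Since $X^\otimes[]$ is built using only the structural connective $\sotimes$, the induction has just three cases, corresponding to the grammar clauses $X^\otimes[] = []$, $X^\otimes[] = Y^\otimes[] \sotimes U$, and $X^\otimes[] = U \sotimes Y^\otimes[]$, where $U$ ranges over arbitrary $\otimes$-structures and $Y^\otimes[]$ over smaller hole-contexts.

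For the base case $X^\otimes[] = []$, I would simply observe that the conclusion $A \impl P$ follows from the two premises $A \impl B$ and $B \impl P$ by a single application of $Cut$. This step is legitimate because \textbf{LG} enjoys Cut admissibility, as noted at the end of Section~2; if a strictly Cut-free derivation is desired, the $Cut$ can be eliminated afterwards.

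For the inductive step, consider first the subcase $X^\otimes[] = Y^\otimes[] \sotimes U$. My first move would be to apply the display rule $r$ to the premise $Y^\otimes[B] \sotimes U \impl P$ to obtain $Y^\otimes[B] \impl P \sslash U$. The induction hypothesis, applied to the smaller context $Y^\otimes[]$, the unchanged premise $A \impl B$, and the displayed sequent $Y^\otimes[B] \impl P \sslash U$, then yields $Y^\otimes[A] \impl P \sslash U$. A second application of $r$ in the opposite direction recovers $Y^\otimes[A] \sotimes U \impl P$, which is exactly the desired $X^\otimes[A] \impl P$. The symmetric subcase $X^\otimes[] = U \sotimes Y^\otimes[]$ is handled identically, except that I would use $\sbackslash$ rather than $\sslash$ when placing the hole into display position.

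I expect no real obstacle here: the argument is a textbook structural induction whose only ingredients are $Cut$ in the base case and the bidirectionality of the display rules of Figure~\ref{fig:lgrules} in the inductive step. The key observation making everything routine is that the rules drawn with double inference lines can be read both upward and downward, so moving the hole into display position and back never costs us derivability.
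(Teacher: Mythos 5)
Your proposal is correct and matches the paper's own proof essentially step for step: the base case is handled by $Cut$ (justified by Cut admissibility), and the two inductive cases use the display rule $r$ to move the non-hole factor to the right-hand side as $P \sslash U$ or $U \sbackslash P$, apply the induction hypothesis, and then display back. No differences worth noting.
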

\begin{proof}
We consider three cases:
\begin{enumerate}
\item If $X^\otimes[A] = A$, it is simply the cut-rule:\\\\
   \infer[Cut]{
      A \impl P
   }{
      A \impl B
   &
      B \impl P
   }
\item If $X^\otimes[A] = Y^\otimes[A] \sotimes V$, we can move $V$ to the righthand-side and use induction to prove the sequent:\\\\
   \infer[r]{
      Y^\otimes[A] \sotimes V \impl P
   }{
      \infer[Repl]{
         Y^\otimes[A] \impl P \sslash V
      }{
         A \impl B
      &
         \infer[r]{
            Y^\otimes[B] \impl P \sslash V
         }{
            Y^\otimes[B] \sotimes V \impl P
         }
      }
   }
\item If $X^\otimes[A] = U \sotimes Y^\otimes[A]$, we can move $U$ to the righthand-side and use induction to prove the sequent:\\\\
   \infer[r]{
      U \sotimes Y^\otimes[A] \impl P
   }{
      \infer[Repl]{
         Y^\otimes[A] \impl U \sbackslash P
      }{
         A \impl B
      &
         \infer[r]{
            Y^\otimes[B] \impl U \sbackslash P
         }{
            U \sotimes Y^\otimes[B] \impl P
         }
      }
   }
\end{enumerate}
\qed\end{proof}

\begin{lemma}\label{lemma:move}
If we want to derive $X^\otimes[A \oslash B] \impl P$, then we can \emph{move} the expression $\oslash B$ out of the $\otimes$-structure. We have the inference rule below:\\\\
\infer[Move]{X^\otimes[A \oslash B] \impl P}{X^\otimes[A] \soslash B \impl P}
\end{lemma}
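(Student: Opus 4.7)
The plan is to proceed by induction on the structure of $X^\otimes[]$, much as in the proof of Lemma~\ref{lemma:replace}.

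In the base case $X^\otimes[] = []$, what needs to be shown is that $A \oslash B \impl P$ follows from $A \soslash B \impl P$, which is exactly an instance of the logical rule $\oslash L$.

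For the induction step there are two subcases, according to which side of the outer $\sotimes$ contains the hole. The strategy in both is the same: use a display rule to turn the outer $\soslash B$ into an $\soplus B$ on the right of the turnstile, apply a Grishin distributivity principle to push that $\soslash B$ back under the $\sotimes$ onto the side that still carries the hole, invoke the induction hypothesis on the smaller $\otimes$-structure to rewrite $A$ into $A \oslash B$ there, and then re-display to restore the $\sotimes$.

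Concretely, for $X^\otimes[] = Y^\otimes[] \sotimes V$ I would start from $(Y^\otimes[A] \sotimes V) \soslash B \impl P$, apply a $dr$ display step to reach $Y^\otimes[A] \sotimes V \impl P \soplus B$, apply the Grishin rule $d\oslash\slash$ to obtain $Y^\otimes[A] \soslash B \impl P \sslash V$, apply the induction hypothesis to derive $Y^\otimes[A \oslash B] \impl P \sslash V$, and finish with an $r$ display step to get $Y^\otimes[A \oslash B] \sotimes V \impl P$. The case $X^\otimes[] = U \sotimes Y^\otimes[]$ is symmetric, using $d\oslash\backslash$ in place of $d\oslash\slash$ so that the $\soslash B$ is pushed into the right subtree and the induction hypothesis applies to $U \sotimes Y^\otimes[A \oslash B]$ via the $X \sbackslash P$ presentation.

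There is no real obstacle here; the only thing requiring attention is choosing the correct Grishin interaction ($d\oslash\slash$ versus $d\oslash\backslash$) in each subcase so that $\soslash B$ ends up on the side of $\sotimes$ that still contains the hole, which is precisely what makes the induction hypothesis on the smaller $\otimes$-structure applicable.
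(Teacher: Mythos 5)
Your proof is correct and follows exactly the same route as the paper: induction on the structure of $X^\otimes[]$, with $\oslash L$ in the base case and, in each inductive case, the sequence $dr$, then $d\oslash\slash$ (resp.\ $d\oslash\backslash$), then the induction hypothesis on the smaller $\otimes$-structure, then $r$. No differences worth noting.
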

\begin{proof}
We consider three cases:
\begin{enumerate}
\item If $X^\otimes[A \oslash B] = A \oslash B$, then this is simply the $\oslash L$-rule:\\\\
   \infer[\oslash L]{
      A \oslash B \impl Y
   }{
      A \soslash B \impl Y
   }
\item If $X^\otimes[A \oslash B] = Y^\otimes[A \oslash B] \sotimes V$, we can move $V$ to the righthand-side and use induction together with the Grishin interaction principles to prove the sequent:\\\\
   \infer[r]{
      Y^\otimes[A \oslash B] \sotimes V \impl P
   }{
      \infer[Move]{
         Y^\otimes[A \oslash B] \impl P \sslash V
      }{
         \infer[d \oslash \slash]{
            Y^\otimes[A] \soslash B \impl P \sslash V
         }{
            \infer[dr]{
               Y^\otimes[A] \sotimes V \impl P \soplus B
            }{
               (Y^\otimes[A] \sotimes V) \soslash B \impl P
            }
         }
      }
   }
\item If $X^\otimes[A \oslash B] = U \sotimes Y^\otimes[A \oslash B]$, we can move $U$ to the righthand-side and use induction together with the Grishin interaction principles to prove the sequent:\\\\
   \infer[r]{
      U \sotimes Y^\otimes[A \oslash B] \impl P
   }{
      \infer[Move]{
         Y^\otimes[A \oslash B] \impl U \sbackslash P
      }{
         \infer[d \oslash \backslash]{
            Y^\otimes[A] \soslash B \impl U \sbackslash P
         }{
            \infer[dr]{
               U \sotimes Y^\otimes[A] \impl P \soplus B
            }{
               (U \sotimes Y^\otimes[A]) \soslash B \impl P
            }
         }
      }
   }
\end{enumerate}
\qed\end{proof}

\begin{lemma} \label{lemma:otimes-sotimes}
$\provable A_1 \otimes (A_2 \otimes \ldots (A_{n-1} \otimes A_n)) \impl P$ iff $\provable A_1 \sotimes (A_2 \sotimes \ldots (A_{n-1} \sotimes A_n)) \impl P$
\end{lemma}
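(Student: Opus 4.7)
The plan is to handle the two directions separately, since they have quite different character.

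For the $\Rightarrow$ direction ($\otimes$ implies $\sotimes$), I would first prove as an auxiliary lemma, by induction on $n$, that
\[
\provable A_1 \sotimes (A_2 \sotimes \ldots (A_{n-1} \sotimes A_n)) \impl A_1 \otimes (A_2 \otimes \ldots (A_{n-1} \otimes A_n)).
\]
The base case $n=1$ is just the axiom $A_1 \impl A_1$. For the inductive step, combine the axiom $A_1 \impl A_1$ with the induction hypothesis applied to the tail, and use a single application of $\otimes R$. With this auxiliary derivation available, from $\provable A_1 \otimes (\ldots) \impl P$ one application of $Cut$ yields $\provable A_1 \sotimes (\ldots) \impl P$; Cut-admissibility for \textbf{LG} (noted earlier in the paper) guarantees that this also gives a Cut-free derivation.

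For the $\Leftarrow$ direction ($\sotimes$ implies $\otimes$), I would induct on $n$. The base case $n=2$ is one application of $\otimes L$ to $A_1 \sotimes A_2 \impl P$. For the inductive step, starting from $A_1 \sotimes (A_2 \sotimes \ldots (A_{n-1} \sotimes A_n)) \impl P$, apply a display rule $r$ to move $A_1$ across the arrow, giving $A_2 \sotimes \ldots (A_{n-1} \sotimes A_n) \impl A_1 \sbackslash P$. The $\sotimes$-structure on the left now has one fewer leaf, so the induction hypothesis (with $A_1 \sbackslash P$ in the role of $P$) delivers $A_2 \otimes \ldots (A_{n-1} \otimes A_n) \impl A_1 \sbackslash P$. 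A display rule in the reverse direction then yields $A_1 \sotimes (A_2 \otimes \ldots (A_{n-1} \otimes A_n)) \impl P$, and since the right argument of the top-level $\sotimes$ is now a single formula, a final $\otimes L$ step produces $A_1 \otimes (A_2 \otimes \ldots (A_{n-1} \otimes A_n)) \impl P$.

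The only subtlety, which I expect to be the main obstacle on first attempt, is that $\otimes L$ cannot simply be applied ``from the inside out'' to convert each inner $\sotimes$ into $\otimes$ in place: the rule demands that the $\sotimes$ it acts on sit at the top level of the antecedent between two genuine formulas. The display rule is what lets us bring the next $\sotimes$ to the top by peeling off $A_1$ into the succedent, and it is also what restores the outermost $\sotimes$ after the inductive conversion of the tail. Beyond this observation the argument is routine.
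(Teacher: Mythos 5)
Your proposal is correct and matches the paper's own proof in both directions: the $\Rightarrow$ direction is the same $n-1$ applications of $\otimes R$ building $A_1 \sotimes (\ldots) \impl A_1 \otimes (\ldots)$ followed by a Cut (discharged by Cut admissibility), and your inductive $\Leftarrow$ argument unfolds to exactly the paper's alternation of $r$ and $\otimes L$ steps. Your closing remark about why $\otimes L$ cannot be applied in place, and why the display rule is needed to expose each $\sotimes$, is precisely the point of the paper's derivation.
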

\begin{proof}
The \emph{if}-part can be derived by the application of $n-1$ times the $\otimes L$ rule together with the $r$ rule: \\\\
\infer[\otimes L]{
   A_1 \otimes (A_2 \otimes \ldots (A_{n-1} \otimes A_n)) \impl P
}{
   \infer[r]{
      A_1 \sotimes (A_2 \otimes \ldots (A_{n-1} \otimes A_n)) \impl P
   }{
      \infer[\otimes L]{
         A_2 \otimes \ldots (A_{n-1} \otimes A_n) \impl A_1 \sbackslash P
      }{
         \infer[r]{
            A_2 \sotimes \ldots (A_{n-1} \otimes A_n) \impl A_1 \sbackslash P
         }{
            \infer[\ldots]{
               \ldots (A_{n-1} \otimes A_n) \impl A_2 \sbackslash (A_1 \sbackslash P)
            }{
               \infer[\otimes L]{
                  A_{n-1} \otimes A_n \impl \ldots \sbackslash (A_2 \sbackslash (A_1 \sbackslash P))
               }{
                  \infer[r^*]{
                     A_{n-1} \sotimes A_n \impl \ldots \sbackslash (A_2 \sbackslash (A_1 \sbackslash P))
                  }{
                     A_1 \sotimes (A_2 \sotimes \ldots (A_{n-1} \sotimes A_n)) \impl P
                  }
               }
            }
         }
      }
   }
}\\\\
The \emph{only-if}-part can be derived by application of $n-1$ times the $\otimes R$ rule followed by a $Cut$:\\\\
\[
\hspace{-2cm}
\infer[Cut]{
   A_1 \sotimes (A_2 \sotimes \ldots (A_{n-1} \sotimes A_n)) \impl P
}{
   \infer[\otimes R]{
      A_1 \sotimes (A_2 \sotimes \ldots (A_{n-1} \sotimes A_n)) \impl A_1 \otimes (A_2 \otimes \ldots (A_{n-1} \otimes A_n))
   }{
      A_1 \impl A_1
   &
      \infer[\otimes R]{
         A_2 \sotimes \ldots (A_{n-1} \sotimes A_n) \impl A_2 \otimes \ldots (A_{n-1} \otimes A_n)
      }{
         A_2 \impl A_2
      &
         \infer[\ldots]{
            \ldots (A_{n-1} \sotimes A_n) \impl \ldots (A_{n-1} \otimes A_n)
         }{
            \infer[\otimes R]{
               A_{n-1} \sotimes A_n \impl A_{n-1} \otimes A_n
            }{
               A_{n-1} \impl A_{n-1}
            &
               A_n \impl A_n
            }
         }
      }
   }
&
   A_1 \otimes (A_2 \otimes \ldots (A_{n-1} \otimes A_n)) \impl P
}
\]
Note that because of the Cut elimination theorem, there exists a cut-free derivation for this sequent.

\qed\end{proof}

\subsection{Type similarity}
The type simililarity relation $\sim$, introduced by \cite{lambek:L}, is the reflexive transitive symmetric closure of the derivability relation. Formally we define this as:
\begin{definition}
$A \sim B$ \emph{iff} there exists a sequence $C_1 \ldots C_n (1 \leq i \leq n)$ such that $C_1 = A$, $C_n = B$ and $C_i \impl C_{i+1}$ or $C_{i+1} \impl C_i$ for all $1 \leq i < n$.
\end{definition}

It was proved by \citeauthor{lambek:L} that $A \sim B$ \emph{iff} one of the following equivalent statements holds (the so-called \emph{diamond property}):
\[ \exists C \mbox{ such that } A \impl C\mbox{ and }B \impl C \quad\mbox{(join)} \]
\[ \exists D \mbox{ such that } D \impl A\mbox{ and }D \impl B \quad\mbox{(meet)} \]
This diamond property will be used in the reduction from SAT to create a choice for a truthvalue of a variable.

\begin{definition}
If $A \sim B$ and $C$ is the \emph{join} type of $A$ and $B$ so that $A \impl C$ and $B \impl C$, we define $\meet{A}{B}{C} = (A \slash ((C \slash C) \backslash C)) \otimes ((C \slash C) \backslash B)$ as the meet type of $A$ and $B$.
\end{definition}
This is also the solution given by \cite{lambek:L} for the associative system \textbf{L}, but in fact this is the shortest solution for the non-associative system \textbf{NL} \citep{foret:joins}.

\begin{lemma}\label{lemma:meettype}
If $A \sim B$ with join-type $C$ and $\provable A \impl P$ or $\provable B \impl P$, then we also have $\provable \meet{A}{B}{C} \impl P$. We can write this as a derived rule of inference:\\\\
\infer[Meet]{
   \meet{A}{B}{C} \impl P
}{
   A \impl P \quad or \quad B \impl P
}
\end{lemma}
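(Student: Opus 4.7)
The plan is to show that $\meet{A}{B}{C}$ is a common lower bound of both $A$ and $B$ in the derivability preorder, i.e.\ $\provable \meet{A}{B}{C} \impl A$ and $\provable \meet{A}{B}{C} \impl B$. Given either $A \impl P$ or $B \impl P$, the desired conclusion $\provable \meet{A}{B}{C} \impl P$ then follows by chaining with a single Cut (admissible in \textbf{LG}).

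For the direction $\meet{A}{B}{C} \impl A$, after $\otimes L$ I would invoke Lemma~\ref{lemma:replace} with the $\otimes$-structure $(A \slash ((C \slash C) \backslash C)) \sotimes []$, reducing the problem to $(C \slash C) \backslash B \impl (C \slash C) \backslash C$ and $(A \slash ((C \slash C) \backslash C)) \sotimes ((C \slash C) \backslash C) \impl A$. The first follows from $B \impl C$ by applying $\backslash L$ (with the identity on $C \slash C$ as the left premise), a display rule, and $\backslash R$; the second reduces via a display rule to a direct $\slash L$-instance whose premises are the identities $(C \slash C) \backslash C \impl (C \slash C) \backslash C$ and $A \impl A$.

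For the more delicate direction $\meet{A}{B}{C} \impl B$, the key intermediate step is $A \slash ((C \slash C) \backslash C) \impl C \slash C$, expressing that the left conjunct of $\meet{A}{B}{C}$ collapses to the ``identity'' type $C \slash C$ because $A$ sits below $C$. To derive this, I would first prove the auxiliary fact $C \impl (C \slash C) \backslash C$ (coming from the identity on $C \slash C$ via $\slash L$, a display step, and $\backslash R$), then apply $\slash L$ to obtain $A \slash ((C \slash C) \backslash C) \impl A \sslash C$, display to $(A \slash ((C \slash C) \backslash C)) \sotimes C \impl A$, Cut against $A \impl C$ (the sole use of this hypothesis) to reach $(A \slash ((C \slash C) \backslash C)) \sotimes C \impl C$, and finally display and apply $\slash R$ to conclude. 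With this intermediate in hand, Lemma~\ref{lemma:replace} applied to $[] \sotimes ((C \slash C) \backslash B)$ reduces $\meet{A}{B}{C} \impl B$ (after $\otimes L$) to $(C \slash C) \sotimes ((C \slash C) \backslash B) \impl B$, which is a direct $\backslash L$-instance.

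The main obstacle is the intermediate sequent $A \slash ((C \slash C) \backslash C) \impl C \slash C$: it is the only step that genuinely exploits $A \impl C$, and it requires several display and residuation rules applied in a specific order. A minor technical point is that the axioms of \textbf{LG} only supply $p \impl p$ for primitive $p$, so the various identity sequents on compound types ($A \impl A$, $C \slash C \impl C \slash C$) used above must themselves be established by a routine structural induction.
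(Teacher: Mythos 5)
Your proof is correct, but it packages the argument differently from the paper. The paper derives $\provable \meet{A}{B}{C} \impl P$ directly and Cut-free: after $\otimes L$ and a display step, it applies $\slash L$ (resp.\ $\backslash L$) with the hypothesis $A \impl P$ (resp.\ $B \impl P$) as one premise and the auxiliary sequent $(C \slash C) \backslash B \impl (C \slash C) \backslash C$ (resp.\ $A \slash ((C \slash C) \backslash C) \impl C \slash C$) as the other. You instead prove the stronger, $P$-independent facts $\provable \meet{A}{B}{C} \impl A$ and $\provable \meet{A}{B}{C} \impl B$ and then compose with the hypothesis by Cut, additionally routing parts of the argument through Lemma~\ref{lemma:replace} and a further Cut against $A \impl C$. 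The derivational content is the same --- your two auxiliary sequents are exactly the ones occurring as subderivations in the paper --- so the difference is one of modularity versus Cut-freeness: your version isolates the reusable, $P$-independent facts about the meet type (which is how meet types are usually presented) at the cost of invoking Cut admissibility several times, whereas the paper's inlined version yields an explicitly Cut-free derivation; since the lemma only asserts derivability, either is acceptable. Your closing remark that identity sequents on compound types require a separate routine induction is correct, and applies equally to the paper's own leaves such as $C \impl C$ and $A \impl A$.
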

\begin{proof}
\noindent
\begin{enumerate}
\item If $A \impl P$:\\
\[
\infer[\otimes L]{
   (A \slash ((C \slash C) \backslash C)) \otimes ((C \slash C) \backslash B) \impl P
}{
   \infer[r]{
      (A \slash ((C \slash C) \backslash C)) \sotimes ((C \slash C) \backslash B) \impl P
   }{
      \infer[\slash L]{
         A \slash ((C \slash C) \backslash C) \impl P \sslash ((C \slash C) \backslash B)
      }{
         \infer[\backslash R]{
            (C \slash C) \backslash B \impl (C \slash C) \backslash C
         }{
            \infer[\backslash L]{
               (C \slash C) \backslash B \impl (C \slash C) \sbackslash C
            }{
               \infer[\slash R]{
                  C \slash C \impl C \slash C
               }{
                  \infer[\slash L]{
                     C \slash C \impl C \sslash C
                  }{
                     C \impl C
                  &
                     C \impl C
                  }
               }
            &
               B \impl C
            }
         }
      &
         A \impl P
      }
   }
}
\]
\item If $B \impl P$:\\
\[
\infer[\otimes L]{
   (A \slash ((C \slash C) \backslash C)) \otimes ((C \slash C) \backslash B) \impl P
}{
   \infer[r]{
      (A \slash ((C \slash C) \backslash C)) \sotimes ((C \slash C) \backslash B) \impl P
   }{
      \infer[\backslash L]{
         (C \slash C) \backslash B \impl (A \slash ((C \slash C) \backslash C)) \sbackslash P
      }{
         \infer[\slash R]{
            A \slash ((C \slash C) \backslash C) \impl C \slash C
         }{
            \infer[\slash L]{
               A \slash ((C \slash C) \backslash C) \impl C \sslash C
            }{
               A \impl C
            &
               \infer[\backslash R]{
                  C \impl (C \slash C) \backslash C
               }{
                  \infer[r]{
                     C \impl (C \slash C) \sbackslash C
                  }{
                     \infer[r]{
                        (C \slash C) \sotimes C \impl C
                     }{
                        \infer[\slash L]{
                           C \slash C \impl C \sslash C
                        }{
                           C \impl C
                        &
                           C \impl C
                        }
                     }
                  }
               }
            }
         }
      &
         B \impl P
      }
   }
}
\]
\end{enumerate}
\qed\end{proof}

The following lemma is the key lemma of this paper, and its use will become clear to the reader in the construction of Section~\ref{section:reduction}.
\begin{lemma}\label{lemma:explicit-meet}
If $\provable \meet{A}{B}{C} \impl P$ then $\provable A \impl P$ or $\provable B \impl P$, if it is not the case that:
\begin{itemize}
\item $P = P'[A'[\outp{(A_1 \otimes A_2)}]]$
\item $\provable A \slash ((C \slash C) \backslash C) \impl A_1$
\item $\provable (C \slash C) \backslash B \impl A_2$
\end{itemize}
\end{lemma}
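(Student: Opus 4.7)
The plan is to induct on the length of a cut-free derivation $\mathcal{D}$ of $\meet{A}{B}{C} \impl P$; abbreviate $M := A \slash ((C \slash C) \backslash C)$ and $N := (C \slash C) \backslash B$, so the meet type is $M \otimes N$. Since the only rule that can remove an input $\otimes$ at the top level of a formula is $\otimes L$, $\mathcal{D}$ must contain a step inferring $M \otimes N \impl P^\ast$ from $M \sotimes N \impl P^\ast$, where $P^\ast$ is obtained from $P$ purely by display rules (no other rule can act between the conclusion and this step). Above that step, $M$ and $N$ are traced upward independently until $M$ is decomposed by a $\slash L$ at a sequent $M \impl Q_M \sslash X_M$, yielding premises $X_M \impl (C \slash C) \backslash C$ and $A \impl Q_M$, and $N$ is decomposed by a $\backslash L$ at $N \impl X_N \sbackslash Q_N$, yielding $X_N \impl C \slash C$ and $B \impl Q_N$.

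The case split is on the structural rules applied between the $\otimes L$ on the meet formula and the decompositions of $M$ and $N$. In the first case, no Grishin interaction acts on the subderivation carrying $M \sotimes N$, so $M$ and $N$ remain siblings under a $\sotimes$ that is only shuffled by display rules; the auxiliary subderivations of $X_M \impl (C \slash C) \backslash C$ and $X_N \impl C \slash C$ force the $X_i$ into trivial shapes, after which the remaining pieces reassemble with the help of Lemmas~\ref{lemma:replace} and~\ref{lemma:move} into a derivation of $A \impl P$ or $B \impl P$. In the second case, a Grishin interaction does act on $M \sotimes N$ and distributes $M$ and $N$ into disjoint output branches; these branches can only be rejoined, higher up in the derivation, by an $\oplus L$ that itself stems from an $\otimes R$ splitting an output $\otimes$ sitting in $P$, and the two sides of that $\otimes R$ then furnish $\provable M \impl A_1$ and $\provable N \impl A_2$ for exactly the exceptional decomposition $P = P'[A'[\outp{(A_1 \otimes A_2)}]]$.

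The main obstacle is the polarity bookkeeping needed to rule out all intermediate configurations. One must verify, by running through the four Grishin rules together with the display rules $r$ and $dr$, that the only way the two halves of the meet formula can land in positions that prevent an inductive recovery of $A \impl P$ or $B \impl P$ is by being driven into the two sides of an output $\otimes$ subformula of $P$. Every other configuration, in particular partial distributions into $\oslash$- or $\obslash$-contexts that ultimately collapse back together, can be rearranged into the no-Grishin case, and carrying out this enumeration carefully is where the bulk of the proof lies.
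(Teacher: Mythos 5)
Your overall strategy (induction on a cut-free derivation, locating the $\otimes L$ step on the meet formula and the logical rules that eventually decompose its components $M = A \slash ((C \slash C) \backslash C)$ and $N = (C \slash C) \backslash B$) starts in the same spirit as the paper's proof, but the case split you build on it is the wrong dichotomy, and that is a genuine gap. You claim the exceptional configuration $P = P'[A'[\outp{(A_1 \otimes A_2)}]]$ arises exactly when a Grishin interaction acts on $M \sotimes N$. Neither direction holds. The simplest instance of the exceptional case is $P = A_1 \otimes A_2$ itself: one displays $M \sotimes N \impl A_1 \otimes A_2$ and applies $\otimes R$, splitting $M$ and $N$ into the two premises with no Grishin interaction anywhere in the derivation. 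So your first case (``no Grishin interaction, hence the pieces reassemble into $A \impl P$ or $B \impl P$'') silently omits precisely the $\otimes R$ possibility that the lemma's side condition exists to exclude. Conversely, a Grishin interaction on $M \sotimes N$ does not force the exceptional case: from $M \soslash Q \impl P \sslash N$ one can re-display $N$ as the whole antecedent and still fire $\backslash L$, recovering $B \impl P$. Also, the mechanism you invoke of an $\oplus L$ ``rejoining branches that stem from an $\otimes R$'' does not exist in this calculus: $\oplus L$ needs a $\oplus$-formula in antecedent position and has no connection to $\otimes R$.

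The paper's argument is organised differently and avoids this. The induction is on which logical rule first touches the sequent: if it acts on the meet structure there are exactly three possibilities --- $\otimes R$ (excluded by hypothesis), $\slash L$ applied to $M \impl P \sslash N$, or $\backslash L$ applied to $N \impl M \sbackslash P$ --- and in the latter two the desired sequent $A \impl P$ (resp.\ $B \impl P$) is \emph{literally one of the premises}, the other component being consumed as the minor premise $N \impl (C \slash C) \backslash C$ (resp.\ $M \impl C \slash C$). No ``reassembly'' via Lemmas~\ref{lemma:replace} and~\ref{lemma:move} is needed or appropriate; those lemmas manipulate $\otimes$-structures in antecedent position, not residual succedent contexts. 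If the first logical rule instead acts on $P$, the meet structure survives intact into one premise, the induction hypothesis yields $A \impl P''[\cdot]$ or $B \impl P''[\cdot]$, and the rule is replayed. Your picture in which $M$ and $N$ are both decomposed ``independently'' at $M \impl Q_M \sslash X_M$ and $N \impl X_N \sbackslash Q_N$ and the results then recombined does not correspond to any derivation shape: whichever of the two is decomposed on the main branch swallows the other as its argument, so only one of $A \impl Q_M$, $B \impl Q_N$ ever has $P$ (or a residue of it) as its succedent.
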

\begin{proof}
We have that $\provable (A \slash ((C \slash C) \backslash C)) \otimes ((C \slash C) \backslash B) \impl P$, so from Lemma~\ref{lemma:otimes-sotimes} we know that $\provable (A \slash ((C \slash C) \backslash C)) \sotimes ((C \slash C) \backslash B) \impl P$. Remark that this also means that there exists a cut-free derivation for this sequent. By induction on the length of the derivation we will show that \emph{if} $\provable (A \slash ((C \slash C) \backslash C)) \sotimes ((C \slash C) \backslash B) \impl P$, \emph{then} $\provable A \impl P$ or $\provable B \impl P$, under the assumption that $P$ is not of the form that is explicitly excluded in this lemma. We will look at the derivations in a top-down way.

The induction base is the case where a logical rule is applied on the lefthand-side of the sequent. At a certain point in the derivation, possibly when $P$ is an atom, one of the following three rules must be applied:
\begin{enumerate}
\item The $\otimes R$ rule, but then $P = A_1 \otimes A_2$ and in order to come to a derivation it must be the case that $\provable A \slash ((C \slash C) \backslash C) \impl A_1$ and $\provable (C \slash C) \backslash B \impl A_2$. However, this is explicitly excluded in this lemma so this can never be the case.
\item The $\slash L$ rule, in this case first the $r$ rule is applied so that we have\\$\provable A \slash ((C \slash C) \backslash C) \impl P \sslash ((C \slash C) \backslash B)$. Now if the $\slash L$ rule is applied, we must have that $\provable A \impl P$.
\item The $\backslash L$ rule, in this case first the $r$ rule is applied so that we have\\$\provable (C \slash C) \backslash B \impl (A \slash ((C \slash C) \backslash C)) \sbackslash P$. Now if the $\backslash L$ rule is applied, we must have that $\provable B \impl P$.
\end{enumerate}

The induction step is the case where a logical rule is applied on the righthand-side of the sequent. Let $\delta = \{r, dr, d \oslash \slash, d \oslash \backslash, d \obslash \slash, d \obslash \backslash \}$ and let $\delta^*$ indicate a (possibly empty) sequence of structural residuation steps and Grishin interactions. For example for the $\oslash R$ rule there are two possibilities:
\begin{itemize}
\item The lefthand-side ends up in the first premisse of the $\oslash R$ rule:
\[
\infer[\delta^*]{
   (A \slash ((C \slash C) \backslash C)) \sotimes ((C \slash C) \backslash B) \impl P[A' \oslash B']
}{
   \infer[\oslash R]{
      P'[(A \slash ((C \slash C) \backslash C)) \sotimes ((C \slash C) \backslash B)] \soslash Q \impl A' \oslash B'
   }{
      \infer[\delta^*]{
         P'[(A \slash ((C \slash C) \backslash C)) \sotimes ((C \slash C) \backslash B)] \impl A'
      }{
         (A \slash ((C \slash C) \backslash C)) \sotimes ((C \slash C) \backslash B) \impl P''[A']
      }
   &
      B' \impl Q
   }
}
\]
In order to be able to apply the $\oslash R$ rule, we need to have a formula of the form $A' \oslash B'$ on the righthand-side. In the first step all structural rules are applied to display this formula in the righthand-side, and we assume that in the lefthand-side the meet-type ends up in the first structural part (inside a structure with the remaining parts from $P$ that we call $P'$). After the $\oslash R$ rule has been applied, we can again display our meet-type in the lefthand-side of the formula by moving all other structural parts from $P'$ back to the righthand-side ($P''$).

In this case it must be that $\provable (A \slash ((C \slash C) \backslash C)) \sotimes ((C \slash C) \backslash B) \impl P''[A']$, and by induction we know that in this case also $\provable A \impl P''[A']$ or $\provable B \impl P''[A']$. In the case that $\provable A \impl P''[A']$, we can show that $\provable A \impl P[A' \oslash B']$ as follows:
\[
\infer[\delta^*]{
   A \impl P[A' \oslash B']
}{
   \infer[\oslash R]{
      P'[A] \soslash Q \impl A' \oslash B'
   }{
      \infer[\delta^*]{
         P'[A] \impl A'
      }{
         A \impl P''[A']
      }
   &
      B' \impl Q
   }
}
\]
The case for $B$ is similar.

\item The lefthand-side ends up in the second premisse of the $\oslash R$ rule:
\[
\infer[\delta^*]{
   (A \slash ((C \slash C) \backslash C)) \sotimes ((C \slash C) \backslash B) \impl P[A' \oslash B']
}{
   \infer[\oslash R]{
      Q \soslash P'[(A \slash ((C \slash C) \backslash C)) \sotimes ((C \slash C) \backslash B)] \impl A' \oslash B'
   }{
      Q \impl A'
   &
      \infer[\delta^*]{
         B' \impl P'[(A \slash ((C \slash C) \backslash C)) \sotimes ((C \slash C) \backslash B)]
      }{
         (A \slash ((C \slash C) \backslash C)) \sotimes ((C \slash C) \backslash B) \impl P''[B']
      }
   }
}
\]
This case is similar to the other case, except that the meet-type ends up in the other premisse. Note that, although in this case it is temporarily moved to the righthand-side, the meet-type will still be in an input polarity position and can therefore be displayed in the lefthand-side again.

In this case it must be that $\provable (A \slash ((C \slash C) \backslash C)) \sotimes ((C \slash C) \backslash B) \impl P''[B']$, and by induction we know that in this case also $\provable A \impl P''[B']$ or $\provable B \impl P''[B']$. In the case that $\provable A \impl P''[B']$, we can show that $\provable A \impl P[A' \oslash B']$ as follows:
\[
\infer[\delta^*]{
   A \impl P[A' \oslash B']
}{
   \infer[\oslash R]{
      Q \soslash P'[A] \impl A' \oslash B'
   }{
      Q \impl A'
   &
      \infer[\delta^*]{
         B' \impl P'[A]
      }{
         A \impl P''[B']
      }
   }
}
\]
The case for $B$ is similar.
\end{itemize}
The cases for the other logical rules are similar.
\qed\end{proof}

\section{Reduction from SAT to LG} \label{section:reduction}
In this section we will show that we can reduce a Boolean formula in conjunctive normal form to a sequent of the \emph{Lambek-Grishin calculus}, so that the corresponding \textbf{LG} sequent is provable \emph{if and only if} the CNF formula is satisfiable. This has already been done for the associative system \textbf{L} by \cite{pentus:np-complete} with a similar construction.

Let $\varphi = c_1 \land \ldots \land c_n$ be a Boolean formula in conjunctive normal form with clauses $c_1 \ldots c_n$ and variables $x_1 \ldots x_m$. For all $1 \leq j \leq m$ let $\neg_0 x_j$ stand for the literal $\neg x_j$ and $\neg_1 x_j$ stand for the literal $x_j$. Now $\langle t_1, \ldots, t_m \rangle \in \{0, 1\}^m$ is a satisfying assignment for $\varphi$ if and only if for every $1 \leq i \leq n$ there exists a $1 \leq j \leq m$ such that the literal $\neg_{t_j} x_j$ appears in clause $c_i$.

Let $p_i$ (for $1 \leq i \leq n$) be distinct primitive types from $Var$. We now define the following families of types:\\

\noindent \begin{tabular}{rcll}
$E^i_j(t)$ & $\leftrightharpoons$ &
	$\left\{ \begin{array}{cl}
		p_i \oslash (p_i \obslash p_i) & \mbox{if } \neg_t x_j \mbox{ appears in clause } c_i \\
		p_i & \mbox{otherwise}
	\end{array} \right.$ & $\begin{array}{l} \mbox{if } 1 \leq i \leq n \mbox{, } 1 \leq j \leq m\\\mbox{and } t \in \{0, 1\} \end{array}$ \\[10pt]
$E_j(t)$ & $\leftrightharpoons$ & $E^1_j(t) \otimes (E^2_j(t) \otimes (\ldots (E^{n-1}_j(t) \otimes E^n_j(t))))$ & if $1 \leq j \leq m$ and $t \in \{0, 1\}$ \\[5pt]
$H_j$ & $\leftrightharpoons$ & $p_1 \otimes (p_2 \otimes (\ldots (p_{n-1} \otimes p_n)))$ & if $1 \leq j \leq m$ \\[5pt]
$F_j$ & $\leftrightharpoons$ & $\meet{E_j(1)}{E_j(0)}{H_j}$ & if $1 \leq j \leq m$ \\[5pt]
$G_0$ & $\leftrightharpoons$ & $H_1 \otimes (H_2 \otimes (\ldots (H_{m-1} \otimes H_m)))$ \\[5pt]
$G_i$ & $\leftrightharpoons$ & $G_{i-1} \oslash (p_i \obslash p_i)$ & if $1 \leq i \leq n$ \\
\end{tabular}\\\\
Let $\bar{\varphi} = F_1 \otimes (F_2 \otimes ( \ldots (F_{m-1} \otimes F_m))) \impl G_n$ be the \textbf{LG} sequent corresponding to the Boolean formula $\varphi$. We now claim that the $\vDash \varphi$ \emph{if and only if} $\provable \bar{\varphi}$.

\subsection{Example}
Let us take the Boolean formula $(x_1 \lor \neg x_2) \land (\neg x_1 \lor \neg x_2)$ as an example. We have the primitive types $\{p_1, p_2\}$ and the types as shown in Figure~\ref{fig:example}. The formula is satisfiable (for example with the assignment $\langle 1, 0 \rangle$), thus $\provable F_1 \otimes F_2 \impl G_2$. A sketch of the derivation is given in Figure~\ref{fig:example}, some parts are proved in lemma's later on.

\begin{sidewaysfigure}
\setlength{\unitlength}{1cm}
\begin{picture}(20,10)
\put(14,9){
   \begin{tabular}{rcl}
   $E_1(0)$ & $=$ & $p_1 \otimes (p_2 \oslash (p_2 \obslash p_2))$ \\
   $E_1(1)$ & $=$ & $(p_1 \oslash (p_1 \obslash p_1)) \otimes p_2$ \\
   $E_2(0)$ & $=$ & $(p_1 \oslash (p_1 \obslash p_1)) \otimes (p_2 \oslash (p_2 \obslash p_2))$ \\
   $E_2(1)$ & $=$ & $p_1 \otimes p_2$ \\
   $H_1$ & $=$ & $p_1 \otimes p_2$ \\
   $H_2$ & $=$ & $p_1 \otimes p_2$ \\
   $F_1$ & $=$ & $\meet{E_1(1)}{E_1(0)}{H_1}$ \\
   $F_2$ & $=$ & $\meet{E_2(1)}{E_2(0)}{H_2}$ \\
   $G_2$ & $=$ & $((H_1 \otimes H_2) \oslash (p_1 \obslash p_1)) \oslash (p_2 \obslash p_2)$ 
   \end{tabular}
}
\put(0,0){
   \infer[\otimes L]{
      F_1 \otimes F_2 \impl G_2
   }{
      \infer[Repl]{
         F_1 \sotimes F_2 \impl G_2
      }{
         \infer[Def]{
            F_1 \sotimes E_2(0) \impl G_2
         }{
            \infer[r]{
               F_1 \sotimes ((p_1 \oslash (p_1 \obslash p_1)) \otimes (p_2 \oslash (p_2 \obslash p_2))) \impl G_2
            }{
               \infer[\otimes L]{
                  (p_1 \oslash (p_1 \obslash p_1)) \otimes (p_2 \oslash (p_2 \obslash p_2)) \impl F_1 \sbackslash G_2
               }{
                  \infer[r]{
                     (p_1 \oslash (p_1 \obslash p_1)) \sotimes (p_2 \oslash (p_2 \obslash p_2)) \impl F_1 \sbackslash G_2
                  }{
                     \infer[Move]{
                        F_1 \sotimes ((p_1 \oslash (p_1 \obslash p_1)) \sotimes (p_2 \oslash (p_2 \obslash p_2))) \impl G_2
                     }{
                        \infer[Def]{
                           (F_1 \sotimes ((p_1 \oslash (p_1 \obslash p_1)) \sotimes p_2)) \soslash (p_2 \obslash p_2) \impl G_2
                        }{
                           \infer[\oslash R]{
                              (F_1 \sotimes ((p_1 \oslash (p_1 \obslash p_1)) \sotimes p_2)) \soslash (p_2 \obslash p_2) \impl G_1 \oslash (p_2 \obslash p_2)
                           }{
                              \infer[Move]{
                                 F_1 \sotimes ((p_1 \oslash (p_1 \obslash p_1)) \sotimes p_2) \impl G_1
                              }{
                                 \infer[Def]{
                                    (F_1 \sotimes (p_1 \sotimes p_2)) \soslash (p_1 \obslash p_1) \impl G_1
                                 }{
                                    \infer[\oslash R]{
                                       (F_1 \sotimes (p_1 \sotimes p_2)) \soslash (p_1 \obslash p_1) \impl (H_1 \otimes H_2) \oslash (p_1 \obslash p_1)
                                    }{
                                       \infer[\otimes R]{
                                          F_1 \sotimes (p_1 \sotimes p_2) \impl H_1 \otimes H_2
                                       }{
                                          \infer[\ref{lemma:lj-tjtj}]{
                                             F_1 \impl H_1
                                          }{
                                             \infer[Def]{
                                                E_1(1) \impl H_1
                                             }{
                                                \infer[\otimes L]{
                                                   (p_1 \oslash (p_1 \obslash p_1)) \otimes p_2 \impl p_1 \otimes p_2
                                                }{
                                                   \infer[\otimes R]{
                                                      (p_1 \oslash (p_1 \obslash p_1)) \sotimes p_2 \impl p_1 \otimes p_2
                                                   }{
                                                      \infer[\oslash L]{
                                                         p_1 \oslash (p_1 \obslash p_1) \impl p_1
                                                      }{
                                                         \infer[dr]{
                                                            p_1 \soslash (p_1 \obslash p_1) \impl p_1
                                                         }{
                                                            \infer[dr]{
                                                               p_1 \impl p_1 \soplus (p_1 \obslash p_1)
                                                            }{
                                                               \infer[\obslash R]{
                                                                  p_1 \sobslash p_1 \impl p_1 \obslash p_1
                                                               }{
                                                                  p_1 \impl p_1
                                                               &
                                                                  p_1 \impl p_1
                                                               }
                                                            }
                                                         }
                                                      }
                                                   &
                                                      p_2 \impl p_2
                                                   }
                                                }
                                             }
                                          }
                                       &
                                          \infer[Def]{
                                             p_1 \sotimes p_2 \impl H_2
                                          }{
                                             \infer[\otimes R]{
                                                p_1 \sotimes p_2 \impl p_1 \otimes p_2
                                             }{
                                                p_1 \impl p_1
                                             &
                                                p_2 \impl p_2
                                             }
                                          }
                                       }
                                    &
                                       p_1 \obslash p_1 \impl p_1 \obslash p_1
                                    }
                                 }
                              }
                           &
                              p_2 \obslash p_2 \impl p_2 \obslash p_2
                           }
                        }
                     }
                  }
               }
            }
         }
      &
         \infer[\ref{lemma:lj-tjtj}]{
            F_2 \impl E_2(0)
         }{}
      }
   }
}
\end{picture}
\caption{Sketch proof for LG sequent corresponding to $(x_1 \lor \neg x_2) \land (\neg x_1 \lor \neg x_2)$} \label{fig:example}
\end{sidewaysfigure}

\subsection{Intuition}
Let us give some intuitions for the different parts of the construction, and a brief idea of why this would work. The basic idea is that on the lefthand-side we create a type for each literal ($F_j$ is the formula for literal j), which will in the end result in the base type $H_j$, so $F_1 \otimes (F_2 \otimes ( \ldots (F_{m-1} \otimes F_m)))$ will result in $G_0$. However, on the righthand-side we have an occurence of the expression $\oslash (p_i \obslash p_i)$ for each clause $i$, so in order to come to a derivation, we need to apply the $\oslash R$ rule for every clause $i$.

Each literal on the lefthand-side will result in either $E_j(1)$ ($x_j$ is \emph{true}) or $E_j(0)$ ($x_j$ is \emph{false}). This choice is created using a \emph{join type} $H_j$ such that $\provable E_j(1) \impl H_j$ and $\provable E_j(0) \impl H_j$, which we use to construct the \emph{meet type} $F_j$. It can be shown that in this case $\provable F_j \impl E_j(1)$ and $\provable F_j \impl E_j(0)$, i.e. in the original formula we can replace $F_j$ by either $E_j(1)$ or $E_j(0)$, giving us a choice for the truthvalue of $x_j$.

Let us assume that we need $x_1 = true$ to satisfy the formula, so on the lefthand-side we need to replace $F_j$ by $E_1(1)$. $E_1(1)$ will be the product of exactly $n$ parts, one for each clause ($E^1_1(1) \ldots E^n_1(1)$). Here $E^i_1(1)$ is $p_i \oslash (p_i \obslash p_i)$ \emph{iff} $x_1$ does appear in clause $i$, and $p_i$ otherwise. The first thing that should be noticed is that $\provable p_i \oslash (p_i \obslash p_i) \impl p_i$, so we can rewrite all $p_i \oslash (p_i \obslash p_i)$ into $p_i$ so that $\provable E_1(1) \impl H_1$.

However, we can also use the type $p_i \oslash (p_i \obslash p_i)$ to facilitate the application of the $\oslash R$ rule on the occurrence of the expression $\oslash (p_i \obslash p_i)$ in the righthand-side. From Lemma~\ref{lemma:move} we know that $\provable X^\otimes[p_i \oslash (p_i \obslash p_i)] \impl G_i$ if $\provable X^\otimes[p_i] \soslash (p_i \obslash p_i) \impl G_i$, so if the expression $\oslash Y$ occurs somewhere in a $\otimes$-structure we can move it to the outside. Hence, from the occurrence of $p_i \oslash (p_i \obslash p_i)$ on the lefthand-side we can move $\oslash (p_i \obslash p_i)$ to the outside of the $\otimes$-structure and $p_i$ will be left behind within the original structure (just as if we rewrote it to $p_i$). However, the sequent is now of the form $X^\otimes[p_i] \soslash (p_i \obslash p_i) \impl G_{i-1} \oslash (p_i \obslash p_i)$, so after applying the $\oslash R$ rule we have $X^\otimes[p_i] \impl G_{i-1}$.

Now if the original CNF formula is satisfiable, we can use the meet types on the lefthand-side to derive the correct value of $E_j(1)$ or $E_j(0)$ for all $j$. If this assignment indeed satisfies the formula, then for each $i$ the formula $p_i \oslash (p_i \obslash p_i)$ will appear at least once. Hence, for all occurrences of the expression $\oslash (p_i \obslash p_i)$ on the righthand-side we can apply the $\oslash R$ rule, after which the rest of the $p_i \oslash (p_i \obslash p_i)$ can be rewritten to $p_i$ in order to derive the base type.

If the formula is not satisfiable, then there will be no way to have the $p_i \oslash (p_i \obslash p_i)$ types on the lefthand-side for \emph{all} $i$, so there will be at least one occurence of $\oslash (p_i \obslash p_i)$ on the righthand-side where we cannot apply the $\oslash R$ rule. Because the $\oslash$ will be the main connective we cannot apply any other rule, and we will never come to a valid derivation.

Note that the meet type $F_j$ provides an \emph{explicit} switch, so we first have to replace it by \emph{either} $E_j(1)$ \emph{or} $E_j(0)$ before we can do anything else with it. This guarantees that if $\provable \bar{\varphi}$, there also must be some assignment $\langle t_1, \ldots, t_m \rangle \in \{0, 1\}^m$ such that $\provable E_1(t_1) \otimes (E_2(t_2) \otimes ( \ldots (E_{m-1}(t_{m-1}) \otimes E_m(t_m)))) \impl G_n$, which means that $\langle t_1, \ldots, t_m \rangle$ is a satisfying assigment for $\varphi$.

\section{Proof}

We will now prove the main claim that $\vDash \varphi$ \emph{if and only if} $\provable \bar{\varphi}$. First we will prove that \emph{if} $\vDash \varphi$, \emph{then} $\provable \bar{\varphi}$.

\subsection{If-part}
Let us assume that $\vDash \varphi$, so there is an assignment $\langle t_1, \ldots, t_m \rangle \in \{0, 1\}^m$ that satisfies $\varphi$.

\begin{lemma} \label{lemma:tij-ci}
If $1 \leq i \leq n$, $1 \leq j \leq m$ and $t \in \{0, 1\}$ then $\provable E^i_j(t) \impl p_i$.
\end{lemma}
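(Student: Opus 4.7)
\noindent
The plan is to split on the definition of $E^i_j(t)$ and treat the two possible values separately.

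If $\neg_t x_j$ does not occur in clause $c_i$, then by definition $E^i_j(t) = p_i$, so $\provable E^i_j(t) \impl p_i$ is just the axiom $Ax$ applied to $p_i \impl p_i$.

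The interesting case is when $\neg_t x_j$ does occur in $c_i$, so $E^i_j(t) = p_i \oslash (p_i \obslash p_i)$. Here I would build the derivation bottom-up starting from two copies of the axiom $p_i \impl p_i$, applying $\obslash R$ to obtain $p_i \sobslash p_i \impl p_i \obslash p_i$. Then two successive applications of the display rule $dr$ reshuffle this to $p_i \impl p_i \soplus (p_i \obslash p_i)$ and then to $p_i \soslash (p_i \obslash p_i) \impl p_i$. A final $\oslash L$ replaces the structural $\soslash$ by the logical $\oslash$ and yields $p_i \oslash (p_i \obslash p_i) \impl p_i$, as desired.

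There is really no obstacle here: the derivation is essentially forced once one reads off what $\oslash$ abbreviates. The main thing to be careful about is that $\obslash R$ has the form $\infer[\obslash R]{P \sobslash X \impl A \obslash B}{X \impl B & A \impl P}$, so both premises are instances of $p_i \impl p_i$; and that the display moves go in the correct direction, which is easy to check against the equivalences $X \soslash Q \impl P \iff X \impl P \soplus Q \iff P \sobslash X \impl Q$ recorded in Figure~\ref{fig:lgrules}. The same short derivation already appears inside the example derivation of Figure~\ref{fig:example} for the index $i=1$, and the general case is identical modulo renaming the atom.
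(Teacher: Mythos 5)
Your proof is correct and is exactly the derivation the paper gives: the axiom case when $E^i_j(t) = p_i$, and otherwise $\obslash R$ on two axioms followed by two $dr$ steps and $\oslash L$ to conclude $p_i \oslash (p_i \obslash p_i) \impl p_i$. No differences worth noting.
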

\begin{proof}
We consider two cases:
\begin{enumerate}
\item If $E^i_j(t) = p_i$ this is simply the axiom rule.

\item If $E^i_j(t) = p_i \oslash (p_i \obslash p_i)$ we can prove it as follows:\\\\
\infer[\oslash L]{
   p_i \oslash (p_i \obslash p_i) \impl p_i
}{
   \infer[dr]{
      p_i \soslash (p_i \obslash p_i) \impl p_i
   }{
      \infer[dr]{
         p_i \impl p_i \soplus (p_i \obslash p_i)
      }{
         \infer[\obslash R]{
            p_i \sobslash p_i \impl p_i \obslash p_i
         }{
            p_i \impl p_i
         &
            p_i \impl p_i
         }
      }
   }
}
\end{enumerate}
\qed\end{proof}

\begin{lemma}\label{lemma:tj-bj}
If $1 \leq j \leq m$ and $t \in \{0, 1\}$, then $\provable E_j(t) \impl H_j$.
\end{lemma}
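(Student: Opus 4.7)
The plan is to reduce the claim to an easy $\otimes R$ induction using the rewrites already proved.

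First I would apply Lemma~\ref{lemma:otimes-sotimes} to replace the outer $\otimes$ tower on the left by $\sotimes$. That is, since $E_j(t) = E^1_j(t) \otimes (E^2_j(t) \otimes (\ldots (E^{n-1}_j(t) \otimes E^n_j(t))))$, it suffices to prove
\[
\provable E^1_j(t) \sotimes (E^2_j(t) \sotimes (\ldots (E^{n-1}_j(t) \sotimes E^n_j(t)))) \impl H_j,
\]
since by Lemma~\ref{lemma:otimes-sotimes} this is equivalent to $\provable E_j(t) \impl H_j$.

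Then I would prove, by downward induction on $k$ from $n$ to $1$, the auxiliary statement
\[
\provable E^k_j(t) \sotimes (E^{k+1}_j(t) \sotimes (\ldots (E^{n-1}_j(t) \sotimes E^n_j(t)))) \impl p_k \otimes (p_{k+1} \otimes (\ldots (p_{n-1} \otimes p_n))).
\]
The base case $k=n$ is just $\provable E^n_j(t) \impl p_n$, which is Lemma~\ref{lemma:tij-ci}. For the induction step, I apply the $\otimes R$ rule with left premiss $\provable E^k_j(t) \impl p_k$ (again Lemma~\ref{lemma:tij-ci}) and right premiss the induction hypothesis. Instantiating this at $k=1$ gives exactly what we need, and the overall lemma follows.

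I do not expect any real obstacle here: Lemma~\ref{lemma:tij-ci} handles the per-coordinate implications, $\otimes R$ assembles them in lockstep because both $E_j(t)$ and $H_j$ are right-associated products over the same index range, and Lemma~\ref{lemma:otimes-sotimes} bridges between the formula-level $\otimes$ on the left and the structural $\sotimes$ needed for $\otimes R$. The only bookkeeping point worth noting is that the induction must be done on the structural version of the sequent, since $\otimes R$ produces a $\sotimes$ on the left; that is precisely why Lemma~\ref{lemma:otimes-sotimes} is invoked once at the outset.
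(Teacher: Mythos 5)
Your proof is correct and follows essentially the same route as the paper: convert the formula-level product into a $\sotimes$-structure via Lemma~\ref{lemma:otimes-sotimes}, use Lemma~\ref{lemma:tij-ci} coordinatewise, and assemble the result with $n-1$ applications of $\otimes R$. The only (harmless) difference is that you feed the sequents $\provable E^i_j(t) \impl p_i$ directly into the premisses of $\otimes R$, whereas the paper first replaces each $E^i_j(t)$ by $p_i$ inside the structure using Lemma~\ref{lemma:replace} and only then applies $\otimes R$; your version is, if anything, slightly more economical.
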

\begin{proof}
From Lemma~\ref{lemma:otimes-sotimes} we know that we can turn $E_j(t)$ into a $\otimes$-structure. From Lemma~\ref{lemma:tij-ci} we know that $\provable E^i_j(t) \impl p_i$, so using Lemma~\ref{lemma:replace} we can replace all $E^i_j(t)$ by $p_i$ in $E_j(t)$ after which we can apply the $\otimes R$ rule $n-1$ times to prove the lemma.
\qed\end{proof}

\begin{lemma}\label{lemma:lj-tjtj}
If $1 \leq j \leq m$, then $\provable F_j \impl E_j(t_j)$
\end{lemma}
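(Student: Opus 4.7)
The plan is to observe that this lemma is an essentially immediate consequence of Lemma~\ref{lemma:meettype} (the $Meet$ rule) combined with Lemma~\ref{lemma:tj-bj}, once we have identity. Unpacking the definition, $F_j = \meet{E_j(1)}{E_j(0)}{H_j}$, so to apply $Meet$ with conclusion $F_j \impl E_j(t_j)$ I need to check three things: (i) $H_j$ really is a join type of $E_j(1)$ and $E_j(0)$, (ii) one of $E_j(1) \impl E_j(t_j)$ or $E_j(0) \impl E_j(t_j)$ is derivable, and (iii) $E_j(1) \sim E_j(0)$ (which is automatic from (i)).

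For (i), Lemma~\ref{lemma:tj-bj} gives $\provable E_j(1) \impl H_j$ and $\provable E_j(0) \impl H_j$, so $H_j$ is indeed a join type of $E_j(1)$ and $E_j(0)$, and in particular $E_j(1) \sim E_j(0)$.

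For (ii), I would use the identity sequent $\provable E_j(t_j) \impl E_j(t_j)$. This is not an axiom of \textbf{LG} in general (the axiom rule only fires on primitive types $p$), but the generalized identity $A \impl A$ is derivable for every formula $A$ by a routine induction on the structure of $A$: the base case is the axiom, and the inductive step uses the corresponding logical rule together with the display rules (e.g., for $A = B \otimes C$, combine $B \impl B$ and $C \impl C$ with $\otimes R$ to get $B \sotimes C \impl B \otimes C$, then $\otimes L$). So $\provable E_j(t_j) \impl E_j(t_j)$ holds.

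Finally, applying the $Meet$ rule from Lemma~\ref{lemma:meettype} to the premise $\provable E_j(t_j) \impl E_j(t_j)$ (taking the left disjunct if $t_j = 1$ and the right disjunct if $t_j = 0$), with $P := E_j(t_j)$ and join type $H_j$, yields $\provable \meet{E_j(1)}{E_j(0)}{H_j} \impl E_j(t_j)$, which is exactly $\provable F_j \impl E_j(t_j)$. There is no real obstacle here; the only thing worth spelling out is the generalized identity in (ii), and even that could be left implicit as a standard fact about display-style sequent systems.
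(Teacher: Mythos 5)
Your proposal is correct and follows essentially the same route as the paper: establish that $H_j$ is a join type of $E_j(1)$ and $E_j(0)$ via Lemma~\ref{lemma:tj-bj}, then apply the $Meet$ rule of Lemma~\ref{lemma:meettype} with $P := E_j(t_j)$. Your explicit remark that the required premise $E_j(t_j) \impl E_j(t_j)$ is a generalized identity needing a routine induction (since the axiom only covers primitive types) is a point the paper leaves implicit, but it does not change the argument.
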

\begin{proof}
From Lemma~\ref{lemma:tj-bj} we know that $\provable E_j(1) \impl H_j$ and $\provable E_j(0) \impl H_j$, so $E_j(1) \sim E_j(0)$ with join-type $H_j$. Now from Lemma~\ref{lemma:meettype} we know that $\provable \meet{E_j(1)}{E_j(0)}{H_j} \impl E_j(1)$ and $\provable \meet{E_j(1)}{E_j(0)}{H_j} \impl E_j(0)$.
\qed\end{proof}

\begin{lemma}\label{lemma:all-lj-tjtj}
We can replace each $F_j$ in $\bar{\varphi}$ by $E_j(t_j)$, so:\\\\
$\infer{F_1 \otimes (F_2 \otimes ( \ldots (F_{m-1} \otimes F_m))) \impl G_n}{E_1(t_1) \sotimes (E_2(t_2) \sotimes ( \ldots (E_{m-1}(t_{m-1}) \sotimes E_m(t_m)))) \impl G_n}$
\end{lemma}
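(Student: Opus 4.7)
The plan is to reduce to a straightforward chain of the derived rules already established. Starting from the assumed premise $E_1(t_1) \sotimes (E_2(t_2) \sotimes (\ldots (E_{m-1}(t_{m-1}) \sotimes E_m(t_m)))) \impl G_n$, I observe that the left-hand side is a $\otimes$-structure $X^\otimes[\,]$ whose leaves are precisely the formulas $E_j(t_j)$, so Lemma~\ref{lemma:replace} applies.

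First, I invoke Lemma~\ref{lemma:lj-tjtj} for each $1 \leq j \leq m$ to obtain $\provable F_j \impl E_j(t_j)$. Then I apply the $Repl$ rule $m$ times in succession: at step $j$, I use the derivation $F_j \impl E_j(t_j)$ together with the current sequent (which still has $E_j(t_j)$ in its $j$-th leaf but already has $F_1,\ldots,F_{j-1}$ substituted in the earlier positions) to replace the leaf $E_j(t_j)$ with $F_j$. After $m$ iterations I reach
\[
F_1 \sotimes (F_2 \sotimes (\ldots (F_{m-1} \sotimes F_m))) \impl G_n.
\]

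Finally, I apply the \emph{if}-direction of Lemma~\ref{lemma:otimes-sotimes} to convert the outer structural product $\sotimes$ into the formula product $\otimes$, producing the desired sequent $F_1 \otimes (F_2 \otimes (\ldots (F_{m-1} \otimes F_m))) \impl G_n$.

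There is no real obstacle here: the whole argument is a mechanical composition of Lemmas~\ref{lemma:replace}, \ref{lemma:otimes-sotimes}, and \ref{lemma:lj-tjtj}. The only thing to be careful about is that $Repl$ requires a $\otimes$-structure, but since at every intermediate stage the leaves being replaced sit inside a purely $\sotimes$-built structure, this hypothesis is preserved throughout the $m$ replacement steps.
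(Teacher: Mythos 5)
Your proposal is correct and takes essentially the same route as the paper: both proofs chain Lemma~\ref{lemma:lj-tjtj} with $m$ applications of the $Repl$ rule (Lemma~\ref{lemma:replace}) inside the $\otimes$-structure and one application of Lemma~\ref{lemma:otimes-sotimes} to pass between the structural and formula products. The only cosmetic difference is that you build the derivation bottom-up from the premise while the paper states the same steps goal-directed from the conclusion.
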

\begin{proof}
This can be proven by using Lemma~\ref{lemma:otimes-sotimes} to turn it into a $\otimes$-structure, and then apply Lemma~\ref{lemma:lj-tjtj} in combination with Lemma~\ref{lemma:replace} $m$ times.
\qed\end{proof}

\begin{lemma} \label{lemma:ci-ci-ci-appears}
In $E_1(t_1) \sotimes (E_2(t_2) \sotimes ( \ldots (E_{m-1}(t_{m-1}) \sotimes E_m(t_m)))) \impl G_n$, there is at least one occurrence of $p_i \oslash (p_i \obslash p_i)$ in the lefthand-side for every $1 \leq i \leq n$.
\end{lemma}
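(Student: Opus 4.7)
The plan is to unpack definitions and exploit the hypothesis that $\langle t_1, \ldots, t_m\rangle$ satisfies $\varphi$. Fix an arbitrary $i$ with $1 \leq i \leq n$. Since the assignment satisfies $\varphi$, it in particular satisfies clause $c_i$, so by the characterization of satisfiability recalled in Section~\ref{section:reduction}, there exists some $j$ with $1 \leq j \leq m$ such that the literal $\neg_{t_j} x_j$ appears in $c_i$.

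Next I would apply the definition of $E^i_j(t)$: precisely in the case that $\neg_t x_j$ appears in $c_i$, we have $E^i_j(t) = p_i \oslash (p_i \obslash p_i)$. So taking $t = t_j$ gives $E^i_j(t_j) = p_i \oslash (p_i \obslash p_i)$. Now unfold $E_j(t_j) = E^1_j(t_j) \otimes (E^2_j(t_j) \otimes (\ldots (E^{n-1}_j(t_j) \otimes E^n_j(t_j))))$: the $i$th component of this product is exactly $p_i \oslash (p_i \obslash p_i)$, which is therefore a subformula of $E_j(t_j)$.

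Finally, $E_j(t_j)$ is itself a component of the $\otimes$-structure $E_1(t_1) \sotimes (E_2(t_2) \sotimes ( \ldots (E_{m-1}(t_{m-1}) \sotimes E_m(t_m))))$ on the lefthand-side of the sequent, so the occurrence of $p_i \oslash (p_i \obslash p_i)$ inside $E_j(t_j)$ is an occurrence in the lefthand-side. Since $i$ was arbitrary, this gives one such occurrence for every $1 \leq i \leq n$.

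There is no real obstacle here — the lemma is essentially a bookkeeping statement that translates the semantic property ``every clause is satisfied'' into the syntactic fact ``every index $i$ is witnessed by some $p_i \oslash (p_i \obslash p_i)$ on the left.'' The only thing to be a little careful about is that the witnessing $j$ depends on $i$, and that the same $E_j(t_j)$ may contribute $p_i \oslash (p_i \obslash p_i)$ factors for several different $i$'s simultaneously; but the lemma only asks for existence of at least one occurrence per $i$, so this poses no difficulty.
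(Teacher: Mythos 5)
Your proof is correct and follows essentially the same route as the paper: use the standing assumption that $\langle t_1, \ldots, t_m \rangle$ satisfies $\varphi$ to find, for each clause $i$, a witnessing index $j$ with $\neg_{t_j} x_j$ in $c_i$, and then read off $E^i_j(t_j) = p_i \oslash (p_i \obslash p_i)$ from the definition. Your version merely spells out the bookkeeping in more detail than the paper does.
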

\begin{proof}
This sequence of $E_1(t_1), \ldots, E_m(t_m)$ represents the truthvalue of all variables, and because this is a satisfying assignment, for all $i$ there is at least one index $k$ such that $\neg_{t_k} x_k$ appears in clause $i$. By definition we have that $E^i_k(t_k) = p_i \oslash (p_i \obslash p_i)$.
\qed\end{proof}

\begin{definition}
$Y^i_j \leftrightharpoons E_j(t_j)$ with every occurrence of $p_k \oslash (p_k \obslash p_k)$ replaced by $p_k$ for all $i < k \leq n$
\end{definition}

\begin{lemma} \label{lemma:induction-base}
$\provable Y^0_1 \sotimes (Y^0_2 \sotimes ( \ldots (Y^0_{m-1} \sotimes Y^0_m))) \impl G_0$
\end{lemma}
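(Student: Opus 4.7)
The plan is first to observe that, as formulas, $Y^0_j = H_j$ for every $j \in \{1, \ldots, m\}$. Indeed, by definition $Y^0_j$ is the formula $E_j(t_j) = E^1_j(t_j) \otimes (E^2_j(t_j) \otimes \ldots (E^{n-1}_j(t_j) \otimes E^n_j(t_j)))$ with every occurrence of $p_k \oslash (p_k \obslash p_k)$ replaced by $p_k$ for all $1 \leq k \leq n$. Since each factor $E^i_j(t_j)$ is, by definition, either $p_i$ or $p_i \oslash (p_i \obslash p_i)$, after the replacement every factor becomes exactly $p_i$, so $Y^0_j$ is literally $p_1 \otimes (p_2 \otimes \ldots (p_{n-1} \otimes p_n)) = H_j$.

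The goal therefore reduces to proving $\provable H_1 \sotimes (H_2 \sotimes (\ldots (H_{m-1} \sotimes H_m))) \impl G_0$, where by definition $G_0 = H_1 \otimes (H_2 \otimes (\ldots (H_{m-1} \otimes H_m)))$. Next, I would apply the $\otimes R$ rule $m-1$ times, peeling off one component at a time from both sides simultaneously: from $X \impl A$ and $Y \impl B$, the rule gives $X \sotimes Y \impl A \otimes B$, so iterating it reduces the goal to the $m$ identity sequents $\provable H_j \impl H_j$ for $j = 1, \ldots, m$.

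Each identity $\provable H_j \impl H_j$ is established by the standard identity-expansion on a product type: invoking Lemma~\ref{lemma:otimes-sotimes} with $A_i := p_i$ and $P := H_j$ (from right to left), it suffices to derive $\provable p_1 \sotimes (p_2 \sotimes (\ldots (p_{n-1} \sotimes p_n))) \impl H_j$, and this in turn follows by $n-1$ applications of $\otimes R$ on the righthand-side, each supplied with a primitive axiom $p_i \impl p_i$.

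There is no real obstacle here; the lemma is essentially bookkeeping. The only point worth stating carefully is the collapse $Y^0_j = H_j$, after which everything else is iterated use of $\otimes R$ together with the already-established Lemma~\ref{lemma:otimes-sotimes}.
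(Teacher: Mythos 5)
Your proof is correct and follows essentially the same route as the paper: observe that $Y^0_j = H_j$ by definition and then apply $\otimes R$ repeatedly. You additionally spell out the derivation of the non-atomic identities $H_j \impl H_j$ via Lemma~\ref{lemma:otimes-sotimes} and axioms $p_i \impl p_i$, a detail the paper leaves implicit but which is handled exactly as you describe.
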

\begin{proof}
Because $Y^0_j = H_j$ by definition for all $1 \leq j \leq m$ and $G_0 = H_1 \otimes (H_2 \otimes ( \ldots (H_{m-1} \otimes H_m)))$, this can be proven by applying the $\otimes R$ rule $m-1$ times.
\qed\end{proof}

\begin{lemma} \label{lemma:induction-step}
If $\provable Y^{i-1}_1 \sotimes (Y^{i-1}_2 \sotimes ( \ldots (Y^{i-1}_{m-1} \sotimes Y^{i-1}_m))) \impl G_{i-1}$, then $\provable Y^i_1 \sotimes (Y^i_2 \sotimes ( \ldots (Y^i_{m-1} \sotimes Y^i_m))) \impl G_i$
\end{lemma}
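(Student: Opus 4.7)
The plan is to use Lemma~\ref{lemma:ci-ci-ci-appears} to pinpoint an occurrence of the subformula $p_i \oslash (p_i \obslash p_i)$ on the left, use the Move rule (Lemma~\ref{lemma:move}) to extract the $\oslash (p_i \obslash p_i)$ to the top of the $\otimes$-structure, and then apply $\oslash R$ so that this extracted copy matches the outermost connective of $G_i = G_{i-1} \oslash (p_i \obslash p_i)$. The remaining $p_i \oslash (p_i \obslash p_i)$ leaves (if any) are then rewritten to $p_i$ by the Repl rule (Lemma~\ref{lemma:replace}) together with Lemma~\ref{lemma:tij-ci}, after which the lefthand side is precisely $Y^{i-1}_1 \sotimes \ldots \sotimes Y^{i-1}_m$ and the induction hypothesis closes the derivation.

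In more detail: first, use Lemma~\ref{lemma:otimes-sotimes} in combination with the display rules to unfold each formula $Y^i_j$ into its flat $\otimes$-structure, whose leaves are the individual $E^k_j(t_j)$ (each either $p_k$ or $p_k \oslash (p_k \obslash p_k)$). By Lemma~\ref{lemma:ci-ci-ci-appears}, at least one leaf of this big $\otimes$-structure is $p_i \oslash (p_i \obslash p_i)$; fix one such occurrence, giving the shape $X^\otimes[p_i \oslash (p_i \obslash p_i)] \impl G_{i-1} \oslash (p_i \obslash p_i)$. Applying Lemma~\ref{lemma:move} reduces the goal to $X^\otimes[p_i] \soslash (p_i \obslash p_i) \impl G_{i-1} \oslash (p_i \obslash p_i)$, and applying $\oslash R$ splits this into $X^\otimes[p_i] \impl G_{i-1}$ and $p_i \obslash p_i \impl p_i \obslash p_i$; the latter is the routine identity expansion, derivable by $\obslash L$ followed by $\obslash R$ on two copies of the atomic axiom $p_i \impl p_i$. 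For the former, iterate Lemma~\ref{lemma:replace} with Lemma~\ref{lemma:tij-ci} to rewrite each remaining leaf $p_i \oslash (p_i \obslash p_i)$ to $p_i$, and finally re-aggregate the flat $\otimes$-structure back into the formula form $Y^{i-1}_1 \sotimes \ldots \sotimes Y^{i-1}_m$ via Lemma~\ref{lemma:otimes-sotimes} (legitimized by cut-admissibility), at which point the hypothesis applies.

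The only real obstacle is bookkeeping: Lemma~\ref{lemma:otimes-sotimes} as stated applies to a formula at the top of a sequent, but we need to unfold and refold $\otimes$-formulas nested inside a larger $\otimes$-structure. This is harmless because the display rules are invertible, so any such local application goes through in context; nonetheless, it is the one place where care is needed to keep the structural context intact. The conceptual content mirrors the intuition of Section~\ref{section:reduction}: because the assignment $\langle t_1, \ldots, t_m \rangle$ satisfies clause $c_i$, at least one literal on the left contributes the $p_i \oslash (p_i \obslash p_i)$ needed to peel off the outermost $\oslash (p_i \obslash p_i)$ of $G_i$, and Move plus Repl reduce everything below that to the previous layer $G_{i-1}$.
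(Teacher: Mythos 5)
Your proposal matches the paper's proof: locate an occurrence of $p_i \oslash (p_i \obslash p_i)$ via Lemma~\ref{lemma:ci-ci-ci-appears}, extract $\oslash (p_i \obslash p_i)$ with Lemma~\ref{lemma:move}, apply $\oslash R$ against $G_i = G_{i-1} \oslash (p_i \obslash p_i)$, and clean up the remaining occurrences with Lemmas~\ref{lemma:tij-ci} and~\ref{lemma:replace} before invoking the hypothesis. The only difference is that you make explicit the unfolding/refolding of the $\otimes$-formulas into $\otimes$-structures (via Lemma~\ref{lemma:otimes-sotimes} and the display rules), a bookkeeping step the paper leaves implicit; your handling of it is correct.
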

\begin{proof}
From Lemma~\ref{lemma:ci-ci-ci-appears} we know that $p_i \oslash (p_i \obslash p_i)$ occurs in $Y^i_1 \sotimes (Y^i_2 \sotimes ( \ldots (Y^i_{m-1} \sotimes Y^i_m)))$ (because the $Y^i_j$ parts are $E_j(t_j)$ but with $p_k \oslash (p_k \obslash p_k)$ replaced by $p_k$ only for $k > i$). Using Lemma~\ref{lemma:move} we can move the expression $\oslash (p_i \obslash p_i)$ to the outside of the lefthand-side of the sequent, after which we can apply the $\oslash R$-rule. After this we can replace all other occurrences of $p_i \oslash (p_i \obslash p_i)$ by $p_i$ using Lemma~\ref{lemma:tij-ci} and Lemma~\ref{lemma:replace}. This process can be summarized as:\\\\

\infer[\ref{lemma:ci-ci-ci-appears}, \ref{lemma:move}, \ref{lemma:tij-ci}, \ref{lemma:replace}]{
   Y^i_1 \sotimes (Y^i_2 \sotimes ( \ldots (Y^i_{m-1} \sotimes Y^i_m))) \impl G_i
}{
    \infer[Def]{
       Y^{i-1}_1 \sotimes (Y^{i-1}_2 \sotimes ( \ldots (Y^{i-1}_{m-1} \sotimes Y^{i-1}_m))) \soslash (p_i \obslash p_i) \impl G_i
    }{
       \infer[\oslash R]{
          (Y^{i-1}_1 \sotimes (Y^{i-1}_2 \sotimes ( \ldots (Y^{i-1}_{m-1} \sotimes Y^{i-1}_m)))) \soslash (p_i \obslash p_i) \impl G_{i-1} \oslash (p_i \obslash p_i)
       }{
          Y^{i-1}_1 \sotimes (Y^{i-1}_2 \sotimes ( \ldots (Y^{i-1}_{m-1} \sotimes Y^{i-1}_m))) \impl G_{i-1}
       &
          p_i \obslash p_i \impl p_i \obslash p_i
       }
    }
}
\qed\end{proof}

\begin{lemma} \label{lemma:y1m-cn}
$\provable Y^n_1 \sotimes (Y^n_2 \sotimes ( \ldots (Y^n_{m-1} \sotimes Y^n_m))) \impl G_n$
\end{lemma}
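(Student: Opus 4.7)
The statement is the payoff of the two preceding lemmas, so the plan is to proceed by a straightforward finite induction on $i$ from $0$ to $n$, with induction hypothesis
\[
\provable Y^i_1 \sotimes (Y^i_2 \sotimes ( \ldots (Y^i_{m-1} \sotimes Y^i_m))) \impl G_i.
\]

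For the base case $i = 0$, I would simply cite Lemma~\ref{lemma:induction-base}, which gives exactly this sequent for $i=0$. For the inductive step, assuming the statement holds at $i-1$, I would apply Lemma~\ref{lemma:induction-step} to conclude it at $i$. Iterating $n$ times yields the desired sequent at $i = n$, which is exactly the claim of the present lemma.

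There is no real obstacle here; all the content has been isolated into Lemmas~\ref{lemma:induction-base} and~\ref{lemma:induction-step}. The only thing worth emphasising in the write-up is that the induction is finite (bounded by $n$, the number of clauses of $\varphi$), and that at each stage the shape of the left-hand side changes in a controlled way: passing from $Y^{i-1}_j$ to $Y^i_j$ replaces occurrences of $p_i \oslash (p_i \obslash p_i)$ by $p_i$, which is precisely what Lemma~\ref{lemma:induction-step} arranges via one application of Lemma~\ref{lemma:move} followed by $\oslash R$ and cleanup by Lemmas~\ref{lemma:tij-ci} and~\ref{lemma:replace}. Combined with Lemma~\ref{lemma:all-lj-tjtj}, which reduces $\bar{\varphi}$ to the $E_j(t_j)$-sequent with the chosen satisfying assignment, this completes the if-direction of the main theorem.
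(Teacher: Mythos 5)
Your proof is correct and is essentially identical to the paper's: a finite induction on $i$ from $0$ to $n$ with Lemma~\ref{lemma:induction-base} as the base case and Lemma~\ref{lemma:induction-step} as the inductive step. The surrounding commentary is accurate enough (modulo the direction in which you describe the $Y^{i-1}_j \leadsto Y^i_j$ replacement), and nothing further is needed.
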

\begin{proof}
We can prove this using induction with Lemma~\ref{lemma:induction-base} as base and Lemma~\ref{lemma:induction-step} as induction step.
\qed\end{proof}

\begin{lemma} \label{lemma:if-part}
If $\vDash \varphi$, then $\provable \bar{\varphi}$,
\end{lemma}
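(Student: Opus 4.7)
The statement follows almost immediately by chaining the preceding lemmas; essentially all the genuine work has already been done. Here is the plan.

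First, I would unpack the definition of $Y^n_j$: by the definition given just before Lemma~\ref{lemma:induction-base}, $Y^i_j$ is obtained from $E_j(t_j)$ by replacing $p_k \oslash (p_k \obslash p_k)$ with $p_k$ for all $i < k \leq n$. When $i = n$, the range of $k$ is empty, so $Y^n_j = E_j(t_j)$. Consequently, Lemma~\ref{lemma:y1m-cn} already gives exactly
\[
\provable E_1(t_1) \sotimes (E_2(t_2) \sotimes (\ldots (E_{m-1}(t_{m-1}) \sotimes E_m(t_m)))) \impl G_n,
\]
where $\langle t_1, \ldots, t_m \rangle$ is the satisfying assignment supplied by the hypothesis $\vDash \varphi$ (the availability of such an assignment is the only place the hypothesis is used; Lemmas~\ref{lemma:induction-base}--\ref{lemma:y1m-cn} had already absorbed the content of satisfaction through Lemma~\ref{lemma:ci-ci-ci-appears}).

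Next, I would feed this sequent into Lemma~\ref{lemma:all-lj-tjtj} as the premise. That lemma states precisely the derived rule that, from
\[
E_1(t_1) \sotimes (E_2(t_2) \sotimes (\ldots (E_{m-1}(t_{m-1}) \sotimes E_m(t_m)))) \impl G_n,
\]
one may conclude
\[
F_1 \otimes (F_2 \otimes (\ldots (F_{m-1} \otimes F_m))) \impl G_n,
\]
which is exactly $\bar{\varphi}$.

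There is no genuine obstacle here; the lemma is essentially a bookkeeping step that ties together the induction (Lemmas~\ref{lemma:induction-base} and~\ref{lemma:induction-step}, yielding Lemma~\ref{lemma:y1m-cn}) with the ``choice mechanism'' provided by the meet types (Lemma~\ref{lemma:all-lj-tjtj}). The only thing worth being careful about is the boundary cases of the $Y^i_j$ notation (namely $Y^n_j = E_j(t_j)$ and $Y^0_j = H_j$), which were already used implicitly in Lemmas~\ref{lemma:induction-base} and~\ref{lemma:y1m-cn}, so no further argument is required.
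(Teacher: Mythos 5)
Your proposal is correct and follows exactly the paper's own argument: instantiate Lemma~\ref{lemma:y1m-cn}, observe that $Y^n_j = E_j(t_j)$ since the replacement range is empty, and then apply Lemma~\ref{lemma:all-lj-tjtj} to recover $\bar{\varphi}$. Your explicit remarks on the boundary cases $Y^n_j = E_j(t_j)$ and $Y^0_j = H_j$ are a slightly more careful presentation of the same bookkeeping the paper does implicitly.
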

\begin{proof}
From Lemma~\ref{lemma:y1m-cn} we know that $\provable Y^n_1 \sotimes (Y^n_2 \sotimes ( \ldots (Y^n_{m-1} \sotimes Y^n_m))) \impl G_n$, and because by definition $Y^n_j = E_j(t_j)$, we also have that $\provable E_1(t_1) \sotimes (E_2(t_2) \sotimes ( \ldots (E_{m-1}(t_{m-1}) \sotimes E_m(t_m)))) \impl G_n$. Finally combining this with Lemma~\ref{lemma:all-lj-tjtj} we have that $\provable \bar{\varphi} = F_1 \otimes (F_2 \otimes ( \ldots (F_{m-1} \otimes F_m))) \impl G_n$, using the assumption that $\vDash \varphi$.
\qed\end{proof}

\subsection{Only-if part}
For the only if part we will need to prove that \emph{if} $\provable \bar{\varphi}$, \emph{then} $\vDash \varphi$. Let us now assume that $\provable \bar{\varphi}$.

\begin{lemma} \label{lemma:oslash-r}
If $\provable X \impl P'[\outp{(P \oslash Y)}]$, then there exist a $Q$ such that $Q$ is part of $X$ or $P'$ (possibly inside a formula in $X$ or $P'$) and $\provable Y \impl Q$.
\end{lemma}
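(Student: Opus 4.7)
My plan is to prove the lemma by induction on the length of a Cut-free derivation of $X \impl P'[\outp{(P \oslash Y)}]$, case-splitting on the last inference rule $R$ applied at the root. Such a derivation exists by Cut admissibility.

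The critical, non-inductive case is when $R$ is an application of $\oslash R$ whose principal formula is the displayed occurrence of $P \oslash Y$. Because $\oslash R$ requires its principal formula to be the entire right-hand side of the conclusion, this forces $P' = []$ and $X$ to have the shape $X_0 \soslash Q$ for some input structure $X_0$ and output structure $Q$. The second premise of this $\oslash R$ is then literally $Y \impl Q$, and $Q$ is visibly a part of $X$; this directly supplies the required witness.

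In every other case, the occurrence of $P \oslash Y$ survives intact in some premise. Display rules and Grishin rules simply rearrange structural material without touching any formula, so the IH applied to the premise produces a witness that also sits inside the conclusion's $X$ or $P'$. Single-premise logical rules that act on a different formula leave the rest of the sequent alone; in two-premise rules (such as $\otimes R$, $\oplus L$, $\slash L$, $\backslash L$, $\obslash R$, or an $\oslash R$ acting on a different subformula) the occurrence of $P \oslash Y$ must lie in exactly one branch, since \textbf{LG} has no contraction, so the IH is applied to that branch. A few rules package structural material into a formula going downward (e.g.\ $\otimes L$ turns $A \sotimes B$ into the formula $A \otimes B$), so the witness returned by the IH may appear only as a proper subformula of a packaged formula in the conclusion --- but the statement of the lemma explicitly allows $Q$ to live ``inside a formula in $X$ or $P'$'', so this is still acceptable.

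The main obstacle is twofold. First, one must check that $\oslash R$ is genuinely the only logical rule that can have $P \oslash Y$ as its principal formula: this relies on $P \oslash Y$ sitting at output polarity (ruling out $\oslash L$) together with the observation that no other rule's conclusion introduces an $\oslash$ at an output position. Second, for each non-principal rule one has to trace the witness $Q$ from the premise back to a part of $X$ or $P'$ in the conclusion, being careful about polarity shifts and about how display and Grishin rules can move $P \oslash Y$ from deep inside $P'$ out to the top of the right-hand side before the $\oslash R$ step actually fires. Each individual verification is routine, but the bookkeeping across the rules of Figure~\ref{fig:lgrules} is where the effort of the proof lies.
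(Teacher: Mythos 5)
Your proposal is correct and follows essentially the same route as the paper: both arguments rest on the observation that $\oslash R$ is the only rule that can consume an output-polarity $\oslash$, that its second premise is literally $Y \impl Q$, and that the structure $Q$ present at that point traces back to material already in $X$ or $P'$. The paper compresses this into a two-sentence sketch, whereas you make the underlying induction on the Cut-free derivation and the per-rule bookkeeping explicit, but no new idea is involved.
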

\begin{proof}
The only rule that matches a $\oslash$ in the righthand-side is the $\oslash R$ rule, so somewhere in the derivation this rule must be applied on the occurrence of $P \oslash Y$. Because this rule needs a $\soslash$ connective in the lefthand-side, we know that if $\provable X \impl P'[\outp{(P \oslash Y)}]$ it must be the case that we can turn this into $X' \soslash Q \impl P \oslash Y$ such that $\provable Y \impl Q$.
\qed\end{proof}

\begin{lemma} \label{lemma:ci-all-i}
If $\provable E_1(t_1) \sotimes (E_2(t_2) \sotimes (\ldots (E_{m-1}(t_{m-1}) \sotimes E_m(t_m))) \impl G_n$, then there is an occurrence $p_i \oslash (p_i \obslash p_i)$ on the lefthand-side at least once for all $1 \leq i \leq n$.
\end{lemma}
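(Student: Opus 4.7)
The plan is to apply Lemma~\ref{lemma:oslash-r} once for each $1 \leq i \leq n$ to force a witness on the lefthand-side. Fix such an $i$. Since $G_k = G_{k-1} \oslash (p_k \obslash p_k)$ for all $k$, I can write $G_n = P'[G_{i-1} \oslash (p_i \obslash p_i)]$, where the output context $P'[\cdot]$ is the nesting of the outer wrappers $\oslash (p_j \obslash p_j)$ for $i+1 \leq j \leq n$.

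First I would apply Lemma~\ref{lemma:oslash-r} to this decomposition, obtaining a structure $Q_i$ that is a part of the lefthand-side $X$ or of $P'$ and satisfies $\provable p_i \obslash p_i \impl Q_i$. Next I would rule out the possibility that $Q_i$ is drawn entirely from $P'$: the only closed subformulas of $P'$ are $p_j$ and $p_j \obslash p_j$ for $j > i$, none of which contains the primitive $p_i$, and no cut-free derivation of $p_i \obslash p_i \impl Q_i$ can close its axioms $p_i \impl p_i$ if $p_i$ does not occur anywhere in $Q_i$. Thus $Q_i$ must be a part of $X$, possibly a subformula of some formula appearing in $X$.

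The remaining task is to argue that $Q_i$ must contain $p_i \obslash p_i$ as a subformula. I would justify this by case analysis on a cut-free derivation of $p_i \obslash p_i \impl Q_i$: the $\obslash L$ rule is the unique left-rule for $\obslash$, so after applying it we are left with the structural form $p_i \sobslash p_i$ on the left, which the display rules and Grishin interactions can only rearrange without eliminating any $p_i$ occurrence. The only way to close the derivation (matching this $\sobslash$ against the right-hand side) is for $Q_i$ itself to contain an $\obslash$-subformula at the corresponding position; since $p_i$ is a fresh primitive distinct from all other $p_j$, the two sides of this $\obslash$ are forced to be $p_i$, yielding $p_i \obslash p_i$ as a subformula of $Q_i$. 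Once this is in hand, the conclusion is immediate: $p_i \obslash p_i$ appears as a subformula in $X$ only through the form $E^i_j(t_j) = p_i \oslash (p_i \obslash p_i)$, producing the required occurrence in the lefthand-side.

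The hard part will be this final subformula argument for $Q_i$: making rigorous the case analysis over all combinations of display rules, Grishin interactions, and logical right-rules, so that no exotic $Q_i$ lacking $p_i \obslash p_i$ as a subformula can admit such a derivation. The freshness and pairwise distinctness of $p_1, \ldots, p_n$, together with the fact that $\obslash L$ is the only way to process the $\obslash$ on the left, are the key ingredients that rule out alternative matchings in the cut-free derivation.
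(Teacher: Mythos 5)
Your proposal follows the same route as the paper's own proof: apply Lemma~\ref{lemma:oslash-r} to each wrapper $\oslash\,(p_i \obslash p_i)$ in $G_n$ and then argue that the only admissible witness $Q$ with $\provable p_i \obslash p_i \impl Q$ is $p_i \obslash p_i$ itself, which can only originate from an occurrence of $p_i \oslash (p_i \obslash p_i)$ on the lefthand-side. The paper dismisses the identification of $Q$ as ``obvious from the construction,'' so your explicit elimination of candidates from $P'$ and the subformula analysis for $Q_i$ is, if anything, more detailed than the original.
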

\begin{proof}
$G_n$ by definition contains an occurrence of the expression $\oslash (p_i \obslash p_i)$ for all $1 \leq i \leq n$. From Lemma~\ref{lemma:oslash-r} we know that somewhere in the sequent we need an occurrence of a structure $Q$ such that $\provable p_i \obslash p_i \impl Q$. From the construction it is obvious that the only possible type for $Q$ is in this case $p_i \obslash p_i$, and it came from the occurrence of $p_i \oslash (p_i \obslash p_i)$ on the lefthand-side.
\qed\end{proof}

\begin{lemma} \label{lemma:t1-m-sat}
If $\provable E_1(t_1) \sotimes (E_2(t_2) \sotimes (\ldots (E_{m-1}(t_{m-1}) \sotimes E_m(t_m))) \impl G_n$, then $\langle t_1, t_2, \ldots, t_{m-1}, t_m \rangle$ is a satisfying assignment for the CNF formula.
\end{lemma}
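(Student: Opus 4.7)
The plan is to apply Lemma~\ref{lemma:ci-all-i} clause-by-clause and match each guaranteed occurrence of $p_i \oslash (p_i \obslash p_i)$ back to a specific literal of the CNF formula, via the definition of $E^i_j(t)$.

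First I would fix an arbitrary clause index $i$ with $1 \le i \le n$ and invoke Lemma~\ref{lemma:ci-all-i} to conclude that the lefthand-side $E_1(t_1) \sotimes (E_2(t_2) \sotimes (\ldots \sotimes E_m(t_m)))$ contains at least one occurrence of the subformula $p_i \oslash (p_i \obslash p_i)$. Next I would trace where such an occurrence can possibly sit: by the construction, each $E_j(t_j)$ is built as the $\otimes$-product $E^1_j(t_j) \otimes (E^2_j(t_j) \otimes \ldots \otimes E^n_j(t_j))$, and the only atomic subformula of any $E^k_j(t_j)$ that has $p_i$ as its principal type and $\oslash$ as its top connective is $E^i_j(t_j)$ itself in the case it equals $p_i \oslash (p_i \obslash p_i)$ (the subscript on $p$ must match $i$ because the $p_i$ are distinct primitive types). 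Hence the occurrence found must be exactly some $E^i_j(t_j)$ for a specific $j = j(i)$, and furthermore $E^i_{j(i)}(t_{j(i)}) = p_i \oslash (p_i \obslash p_i)$.

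By the definition of $E^i_j(t)$, the equality $E^i_{j(i)}(t_{j(i)}) = p_i \oslash (p_i \obslash p_i)$ holds precisely when the literal $\neg_{t_{j(i)}} x_{j(i)}$ appears in clause $c_i$. Recalling the satisfaction criterion stated at the beginning of Section~\ref{section:reduction}, the assignment $\langle t_1, \ldots, t_m \rangle$ satisfies $c_i$ exactly when there exists a $j$ such that $\neg_{t_j} x_j$ appears in $c_i$; witnessing $j = j(i)$ does the job. Ranging $i$ over all clauses yields that $\langle t_1, \ldots, t_m \rangle$ satisfies every clause, and therefore satisfies $\varphi$.

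The main obstacle, which is actually already handled by Lemma~\ref{lemma:ci-all-i}, is ruling out that the required witness could arise from some other part of the derivation (for instance, a $p_i \oslash (p_i \obslash p_i)$ introduced on the right being matched against nothing on the left). Beyond that, the remaining step is purely structural: recognising that within the lefthand-side no subformula of the shape $p_i \oslash (p_i \obslash p_i)$ can occur except as an $E^i_j(t_j)$-leaf — which is immediate from inspecting the definitions of $E_j$ and $E^i_j$ and the distinctness of the primitive types $p_1, \ldots, p_n$.
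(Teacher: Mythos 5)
Your proposal is correct and follows essentially the same route as the paper's own proof: invoke Lemma~\ref{lemma:ci-all-i} to obtain an occurrence of $p_i \oslash (p_i \obslash p_i)$ on the lefthand-side for each clause $i$, identify that occurrence with some $E^i_j(t_j)$, and read off from the definition of $E^i_j$ that the literal $\neg_{t_j} x_j$ appears in $c_i$. Your extra remark that the distinctness of the $p_i$ forces the occurrence to be exactly an $E^i_j(t_j)$-leaf is a slightly more careful spelling-out of what the paper leaves implicit, but it is the same argument.
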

\begin{proof}
From Lemma~\ref{lemma:ci-all-i} we know that there is a $p_i \oslash (p_i \obslash p_i)$ in the lefthand-side of the formula for all $1 \leq i \leq n$. From the definition we know that for each $i$ there is an index $j$ such that $E^i_j(t_j) = p_i \oslash (p_i \obslash p_i)$, and this means that $\neg_{t_j} x_j$ appears in clause $i$, so all clauses are satisfied. Hence, this choice of $t_1 \ldots t_m$ is a satisfying assignment.
\qed\end{proof}

\begin{lemma} \label{lemma:explicit-switch}
If $1 \leq j \leq m$ and $\provable X^\otimes[F_j] \impl G_n$, then $\provable X^\otimes[E_j(0)] \impl G_n$ or $\provable X^\otimes[E_j(1)] \impl G_n$.
\end{lemma}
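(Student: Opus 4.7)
My plan is to isolate $F_j$ on the left, apply Lemma~\ref{lemma:explicit-meet}, and then reassemble. First I would repeatedly apply the display rule $r$ to $X^\otimes[F_j] \impl G_n$, pushing every other factor of the $\otimes$-structure onto the right-hand side; since $X^\otimes$ uses only the $\sotimes$ connective, this yields a sequent of the form $F_j \impl P$, where $P$ is the output structure obtained by wrapping $G_n$ with $\sslash$ and $\sbackslash$ around the remaining factors of $X^\otimes$. Because $F_j = \meet{E_j(1)}{E_j(0)}{H_j}$, Lemma~\ref{lemma:explicit-meet} then gives $\provable E_j(1) \impl P$ or $\provable E_j(0) \impl P$, provided its exclusion clause does not apply; reversing the display steps on whichever holds yields the desired $\provable X^\otimes[E_j(0)] \impl G_n$ or $\provable X^\otimes[E_j(1)] \impl G_n$.

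The bulk of the proof is verifying that the exclusion clause is violated, i.e.\ that no output-polarity subformula $A_1 \otimes A_2$ of $P$ satisfies $\provable E_j(1) \slash ((H_j \slash H_j) \backslash H_j) \impl A_1$. I would enumerate the output-polarity $\otimes$-subformulas of $P$ by tracking polarities. Those coming from $G_n$ live inside $G_0 = H_1 \otimes (H_2 \otimes \cdots)$ and inside each $H_k = p_1 \otimes (p_2 \otimes \cdots)$, giving left factors $A_1$ that are either a primitive $p_i$ or a full $H_k$. Those coming from the $F_k$ factors in $X^\otimes$, which sit in input position within $P$, are hidden in the three occurrences of $H_k$ in output polarity that each $F_k$ contains (the subformulas $(H_k \slash H_k) \backslash H_k$ and $(H_k \slash H_k) \backslash E_k(0)$ produce them), and again give $A_1$ of the same two shapes.

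It then suffices to show that $\provable E_j(1) \slash ((H_j \slash H_j) \backslash H_j) \impl A_1$ is not cut-free derivable when $A_1$ is either a primitive type or has $\otimes$ as its top connective. Reading a hypothetical derivation bottom-up, the left-hand side is a single $\slash$-formula and is structurally atomic, so no display rule ($r$ or $dr$) applies, no Grishin interaction applies, the only logical rule that could decompose the $\slash$ on the left is $\slash L$ (which needs the right-hand side displayed as $P \sslash X$), and $\otimes R$ (the only rule that would produce $\otimes$ on the right) needs the left-hand side to be a $\sotimes$-structure. Since $A_1$ is either primitive or $\otimes$-headed, none of these options is available, so the sequent is underivable; this rules out the exclusion clause uniformly.

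The main obstacle I expect is the careful polarity bookkeeping in the enumeration of output $\otimes$-subformulas of $P$, especially spotting the output-polarity $H_k$'s hidden inside each $F_k$ occurrence, which sits in an input position within $P$ yet still contributes output-polarity $\otimes$-subformulas.
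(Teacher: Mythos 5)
Your proof is correct and follows essentially the same route as the paper's: display $F_j$ via repeated $r$ steps, invoke Lemma~\ref{lemma:explicit-meet}, verify its exclusion clause, and reverse the display steps. In fact your polarity bookkeeping is more careful than the paper's, which dismisses the $\otimes$-connectives inside the other $F_k$ as all having input polarity, whereas, as you correctly observe, each $F_k$ contributes three output-polarity occurrences of $H_k$; these are nevertheless harmless for exactly the reason you give, namely that every output-polarity $\otimes$-subformula in sight has a left factor $A_1$ that is primitive or $\otimes$-headed, so $E_j(1) \slash ((H_j \slash H_j) \backslash H_j) \impl A_1$ is not the conclusion of any rule and the exclusion clause cannot fire.
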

\begin{proof}
We know that $X^\otimes[F_j]$ is a $\otimes$-structure, so we can apply the $r$ rule several times to move all but the $F_j$-part to the righthand-side. We then have that $\provable F_j \impl \ldots \sbackslash G_n \sslash \dots$. From Lemma~\ref{lemma:explicit-meet} we know that we now have that $\provable E_j(0) \impl \ldots \sbackslash G_n \sslash \dots$ or $\provable E_j(1) \impl \ldots \sbackslash G_n \sslash \dots$. Finally we can apply the $r$ rule again to move all parts back to the lefthand-side, to show that $\provable X^\otimes[E_j(0)] \impl G_n$ or $\provable X^\otimes[E_j(1)] \impl G_n$.

Note that, in order for Lemma~\ref{lemma:explicit-meet} to apply, we have to show that this sequent satisfies the constraints. $G_n$ does contain $A_1 \otimes A_2$ with output polarity, however the only connectives in $A_1$ and $A_2$ are $\otimes$. Because no rules apply on $A \slash ((C \slash C) \backslash C) \impl A_1' \otimes A_1''$, we have that $\nprovable A \slash ((C \slash C) \backslash C) \impl A_1$. In $X^\otimes[]$, the only $\otimes$ connectives are within other $F_k$, however these have an input polarity and do not break the constraints either.

So, in all cases $F_j$ provides an \emph{explicit switch}, which means that the truthvalue of a variable can only be changed in all clauses simultanously.
\qed\end{proof}

\begin{lemma} \label{lemma:only-if-part}
If $\provable \bar{\varphi}$, then $\vDash \varphi$.
\end{lemma}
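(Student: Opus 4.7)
The plan is to chain together the preceding lemmas to extract a satisfying assignment from a derivation of $\bar{\varphi}$. First I would use Lemma~\ref{lemma:otimes-sotimes} to convert the outer $\otimes$-connectives on the lefthand-side into $\sotimes$, obtaining $\provable F_1 \sotimes (F_2 \sotimes (\ldots (F_{m-1} \sotimes F_m))) \impl G_n$. This puts the sequent in the form $\provable X^\otimes[F_j] \impl G_n$ (viewing each $F_j$ in turn as sitting inside the $\otimes$-structure formed by the other $F_k$'s).

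Next I would apply Lemma~\ref{lemma:explicit-switch} once for each $j = 1, \ldots, m$. Each application replaces one $F_j$ by either $E_j(0)$ or $E_j(1)$, and I would record the corresponding bit as $t_j \in \{0,1\}$. After $m$ iterations I obtain a specific assignment $\langle t_1, \ldots, t_m \rangle$ together with a derivation of
\[
\provable E_1(t_1) \sotimes (E_2(t_2) \sotimes (\ldots (E_{m-1}(t_{m-1}) \sotimes E_m(t_m)))) \impl G_n.
\]
Finally, Lemma~\ref{lemma:t1-m-sat} immediately yields that this assignment satisfies $\varphi$, so $\vDash \varphi$ as required.

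The only delicate point is verifying, at each application of Lemma~\ref{lemma:explicit-switch}, that its hypothesis (the structural constraint inherited from Lemma~\ref{lemma:explicit-meet}) remains satisfied after the previous switches. The first switch handles this exactly as done in the proof of Lemma~\ref{lemma:explicit-switch}: the only output-polarity $\otimes$'s sit inside $G_n$ within $p_k$-only subformulas, so the excluded case cannot arise. After replacing $F_j$ by $E_j(t_j)$, the new lefthand-side only adds $\otimes$'s in \emph{input} polarity positions (inside the remaining $F_k$'s and inside the newly substituted $E_j(t_j)$'s), so the same reasoning applies at every subsequent step. Hence all $m$ applications go through and the conclusion follows.
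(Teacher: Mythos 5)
Your proof follows essentially the same route as the paper: apply Lemma~\ref{lemma:explicit-switch} repeatedly to replace each $F_j$ by some $E_j(t_j)$, then invoke Lemma~\ref{lemma:t1-m-sat} on the resulting sequent. Your version is in fact slightly more careful than the paper's, since you make the iteration explicit and re-verify at each step that the polarity constraint of Lemma~\ref{lemma:explicit-meet} still holds after earlier substitutions — a point the paper's one-line proof leaves implicit.
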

\begin{proof}
From Lemma~\ref{lemma:explicit-switch} we know that all derivations will first need to replace each $F_j$ by \emph{either} $E_j(1)$ \emph{or} $E_j(0)$. This means that if $\provable F_1 \otimes (F_2 \otimes ( \ldots (F_{m-1} \otimes F_m))) \impl G_n$, then also $\provable E_1(t_1) \sotimes (E_2(t_2) \sotimes (\ldots (E_{m-1}(t_{m-1}) \sotimes E_m(t_m))) \impl G_n$ for some $\langle t_1, t_2, \ldots, t_{m-1}, t_m \rangle \in \{0,1\}^m$. From Lemma~\ref{lemma:t1-m-sat} we know that this is a satisfying assignment for $\varphi$, so if we assume that $\provable \bar{\varphi}$, then $\vDash \varphi$.
\qed\end{proof}

\subsection{Conclusion}
\begin{theorem}
\textbf{LG} is NP-complete.
\end{theorem}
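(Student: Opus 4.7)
The plan is to combine the two classical ingredients for NP-completeness: membership in NP and NP-hardness, both of which are essentially already in place from the preceding sections.

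For NP-membership, I would invoke Lemma~\ref{lemma:length-n3}, which guarantees that any derivable sequent $\varphi$ admits a Cut-free derivation of length $O(|\varphi|^3)$. A nondeterministic Turing machine can therefore guess such a derivation (whose total bit-length is polynomial, since each rule application mentions only structural material already present in the endsequent up to the polynomial bound on its internal structure) and verify it in polynomial time by checking each of the $O(|\varphi|^3)$ inference steps locally against the rule schemata in Figure~\ref{fig:lgrules}. Hence the derivability problem for \textbf{LG} lies in NP.

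For NP-hardness, I would use the reduction of Section~\ref{section:reduction}. First I would observe that the map $\varphi \mapsto \bar{\varphi}$ is computable in polynomial time: each $E^i_j(t)$ has constant size, so each $E_j(t)$ and each $H_j$ has size $O(n)$; consequently each meet type $F_j = \meet{E_j(1)}{E_j(0)}{H_j}$ has size $O(n)$ as well, and $G_n$ has size $O(mn)$, giving $|\bar{\varphi}| = O(mn)$, which is polynomial in $|\varphi|$. Correctness of the reduction is exactly the conjunction of Lemma~\ref{lemma:if-part} and Lemma~\ref{lemma:only-if-part}: $\vDash \varphi$ iff $\provable \bar{\varphi}$. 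Since SAT is NP-complete, this polynomial-time many-one reduction establishes that \textbf{LG}-derivability is NP-hard.

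Combining the two halves yields NP-completeness. No single step is a real obstacle here, since all the non-trivial work has been done in the preceding lemmas; the only minor subtlety worth spelling out is why the guessed derivation object itself fits in polynomial space. For this I would note that every structure appearing inside the polynomially-bounded derivation is built from subformulas of $\varphi$ together with $O(|\varphi|)$ structural connectives (as used in Lemma~\ref{lemma:residuation-2n}), so each individual sequent in the derivation has size $O(|\varphi|^2)$, and the whole certificate has total size $O(|\varphi|^5)$, comfortably polynomial. This settles the theorem.
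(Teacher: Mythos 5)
Your proposal is correct and follows essentially the same route as the paper: NP-membership from the polynomial derivation-length bound of Lemma~\ref{lemma:length-n3}, and NP-hardness from the SAT reduction via Lemma~\ref{lemma:if-part} and Lemma~\ref{lemma:only-if-part}. The extra details you supply (the $O(mn)$ size of $\bar{\varphi}$ and the polynomial bit-length of the certificate) are correct elaborations of points the paper leaves implicit rather than a different argument.
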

\begin{proof}
From Lemma~\ref{lemma:length-n3} we know that for every derivable sequent there exists a proof that is of polynomial length, so the derivability problem for \textbf{LG} is in $NP$. From Lemma~\ref{lemma:if-part} and Lemma~\ref{lemma:only-if-part} we can conclude that we can reduce SAT to \textbf{LG}. Because SAT is a known NP-hard problem \citep{garey:np-completeness}, and our reduction is polynomial, we can conclude that derivability for \textbf{LG} is also NP-hard.

Combining these two facts we conclude that the derivability problem for \textbf{LG} is NP-complete.
\qed\end{proof}

% References
\bibliography{references}{}

\end{document}